\documentclass{article}
\usepackage{iclr2026_conference,times}

\usepackage[english]{babel} 
\usepackage[expansion=false]{microtype}      
\usepackage{csquotes}       
\usepackage[normalem]{ulem} 
\usepackage{xspace}         
\usepackage{textcomp}       

\usepackage{type1cm}        
\usepackage{bold-extra}     
\usepackage{bm}             
\usepackage{bbm}            
\usepackage{bbold}          
\usepackage{siunitx}        
\usepackage{xfrac}          

\usepackage{amsmath,amsfonts,amssymb,mathtools} 
\usepackage{amsthm}
\usepackage{physics}        
\usepackage{thmtools,thm-restate} 

\usepackage{booktabs}       
\usepackage{multirow}       
\usepackage{array}          
\usepackage{rotating}

\usepackage{graphicx}       
\usepackage[style=base, tableposition=top]{caption} 
\usepackage[caption=false,font=normalsize,labelfont=sf,textfont=sf]{subfig} 
\usepackage{tikz}           
\usetikzlibrary{patterns, positioning, arrows, bayesnet, calc} 
\usepackage{tikz-cd}        
\usepackage{pgfplots}       
\usepgfplotslibrary{patchplots}
\pgfplotsset{compat=1.15}   
\usepackage[most]{tcolorbox}      
\usepackage{wrapfig}
\usepackage{algorithm}
\usepackage{algorithmic}

\usepackage{hyperref} 
\usepackage[capitalize]{cleveref} 

\usepackage{balance}        
\usepackage{hyphenat}       
\usepackage[show]{chato-notes} 
\usepackage{stfloats}       
\usepackage{verbatim}       

\newcommand{\spara}[1]{\smallskip\noindent\textbf{#1}}

\newcommand{\epara}[1]{\smallskip\noindent\emph{#1}}

{\par\egroup\vskip 0.25ex}

\usepackage[dvipsnames]{xcolor}
\hypersetup{
   colorlinks=true,
   linkcolor={black},
   filecolor={black},
   citecolor={black}, 
   urlcolor={black},
}

\definecolor{mypurple}{RGB}{254, 68, 218}
\definecolor{myred}{HTML}{E13D66}
\definecolor{mycyan}{HTML}{70D7D0}
\definecolor{mylightblue}{HTML}{2274A5}
\definecolor{mydarkblue}{HTML}{0C0A3E}

\newcommand{\blue}[1]{\textcolor{blue}{#1}}

\newcommand{\red}[1]{\textcolor{red}{#1}}

\newcommand{\mulberry}[1]{\textcolor{Mulberry}{#1}}
\newcommand{\teal}[1]{\textcolor{teal}{#1}}

\theoremstyle{plain}
\newtheorem{theorem}{Theorem}[section]
\newtheorem{proposition}[theorem]{Proposition}
\newtheorem{lemma}[theorem]{Lemma}

\newtheorem{problem}{Problem}
\newtheorem{remark}{Remark}

\newtheorem{assumption}{Assumption}

\tcolorboxenvironment{example}{
  colback=black!5!white,
  colframe=black!5!white,
  boxrule=0.8pt,
  arc=3mm,
  fonttitle=\bfseries,
  before skip=10pt,
  after skip=10pt,
  breakable
}

\crefname{theorem}{Thm.}{Thms.}
\crefname{proposition}{Prop.}{Props.}
\crefname{lemma}{lem.}{lems.}
\crefname{corollary}{Cor.}{Cors.}
\crefname{definition}{Def.}{Defs.}
\crefname{section}{Sec.}{Secs.}
\crefname{figure}{Fig.}{Figs.}
\crefname{problem}{Prob.}{Probs.}
\crefname{appendix}{App.}{Apps.}
\crefname{equation}{Eq.}{Eqs.}
\crefname{algorithm}{Alg.}{Algs.}
\crefname{table}{Tab.}{Tabs.}
\crefname{assumption}{Assumption}{Assumptions}

\newcommand{\nb}{~}

\DeclareMathOperator*{\argmin}{arg\,min}

\newcommand{\at}[2][]{#1|_{#2}}
\newcommand{\frob}[1]{\ensuremath{\norm{#1}_{\mathrm{F}}}\xspace}

\newcommand{\im}[1]{\ensuremath{\mathrm{Im}\left(#1\right)}\xspace}

\newcommand{\reall}{\ensuremath{\mathbb{R}}\xspace}

\newcommand{\edgeset}{\ensuremath{\mathcal{E}}\xspace}
\newcommand{\graph}{\ensuremath{\mathcal{G}}\xspace}

\newcommand{\vertexset}{\ensuremath{\mathcal{N}}\xspace}

\newcommand{\ort}[1]{\ensuremath{\mathrm{O}({#1})}\xspace}
\newcommand{\stiefel}[2]{\ensuremath{\mathrm{St}({#1},{#2})}\xspace}

\newcommand{\ones}{\ensuremath{\boldsymbol{1}}\xspace}
\newcommand{\zeros}{\ensuremath{\boldsymbol{0}}\xspace}

\newcommand{\x}{\ensuremath{\mathbf{x}}\xspace}
\newcommand{\y}{\ensuremath{\mathbf{y}}\xspace}
\newcommand{\w}{\ensuremath{\mathbf{w}}\xspace}
\newcommand{\z}{\ensuremath{\mathbf{z}}\xspace}

\newcommand{\A}{\ensuremath{\mathbf{A}}\xspace}

\newcommand{\identity}{\ensuremath{\mathbf{I}}\xspace}
\newcommand{\eye}[1]{\ensuremath{\identity_{#1}}\xspace}

\newcommand{\myO}{\ensuremath{\mathbf{O}}\xspace}
\newcommand{\myP}{\ensuremath{\mathbf{P}}\xspace}
\newcommand{\Q}{\ensuremath{\mathbf{Q}}\xspace}

\newcommand{\T}{\ensuremath{\mathbf{T}}\xspace}
\newcommand{\U}{\ensuremath{\mathbf{U}}\xspace}
\newcommand{\V}{\ensuremath{\mathbf{V}}\xspace}

\newcommand{\W}{\ensuremath{\mathbf{W}}\xspace}
\newcommand{\Y}{\ensuremath{\mathbf{Y}}\xspace}

\newcommand{\ns}{\ensuremath{\mathcal{F}}\xspace}

\title{Sheaf-Based Federated Representation \\Learning
}

\author{
Gabriele D'Acunto$^{*}$
\And
Enrico Grimaldi$^{*}$
\And
Valeria Avino
\AND
Mario Edoardo Pandolfo
\And
Leonardo Di Nino
\And
Sergio Barbarossa
\AND
{}
\And
\hspace{.5cm} Paolo Di Lorenzo}

\iclrfinalcopy
\begin{document}

\maketitle

\begingroup
\renewcommand\thefootnote{*}
\footnotetext{These authors contributed equally. All authors are with Sapienza University of Rome, Italy.
Correspondence to: \texttt{gabriele.dacunto@uniroma1.it} and \texttt{enrico.grimaldi@uniroma1.it}.}
\endgroup

\begin{abstract}
Heterogeneous federated systems require agents to learn and exchange informative representations despite differences in data distributions, sensing modalities, model architectures, latent dimensionalities, and local learning objectives. To address this challenge, we propose Sheaf-based Federated Representation Learning (SFRL), a general framework that jointly optimizes local objectives with a manifold-constrained geometric alignment regularizer based on learnable sheaf restriction maps. Unlike most existing approaches, SFRL does not assume a shared global latent space. Instead, global consistency emerges from the alignment of neighboring latent representations through orthogonal transformations and isometric embeddings. This alignment is enforced by a quadratic gluing regularizer induced by the sheaf Laplacian, whose learnable restriction maps adapt the geometry to the observed data.
The penalty is evaluated on a small set of shared pilot samples, ensuring scalability and communication efficiency. We develop a decentralized algorithm for solving SFRL, termed Sheaf-FRL, which alternates between gradient updates of the local models and closed-form Procrustes updates of the edge-wise restriction maps. We further establish convergence of Sheaf-FRL to first-order stationary points in both deterministic and stochastic settings. As an application, we consider a cooperative classification task in the context of semantic communication, under model and data heterogeneity.
Our results show that Sheaf-FRL outperforms baseline approaches in terms of local and post-communication classification accuracy across different levels of local distribution shift and exhibits greater robustness to latent-space dimensionality compression.
\end{abstract}

\section{Introduction}\label{sec:introduction}

Modern machine learning systems increasingly operate in decentralized and heterogeneous environments, where data, models, sensing modalities, and computational resources are distributed across multiple agents, devices, or institutions. In this context, we adopt the term \emph{agent} to denote any physical or artificial entity performing a task in an environment. Privacy constraints, regulatory requirements, and communication limitations often prevent centralized data aggregation, motivating \emph{federated learning} (FL), where agents optimize local models on private data while exchanging limited information rather than raw samples~\citep{kairouz2021advances}.

A central challenge in such systems is that agents may learn representations that are useful locally but incompatible across the network. Existing FL methods commonly rely on parameter averaging, global regularization, or shared encoders, and therefore implicitly assume that features learned by different agents are directly comparable~\citep{DBLP:conf/aistats/McMahanMRHA17,scardapane2017framework,DBLP:conf/mlsys/LiSZSTS20,pmlr-v119-karimireddy20a,Collins2021FedRep,Li2021MOON,Dong2021FedMoCo}. This assumption can fail when agents differ in their data distributions, sensing modalities, architectures, latent dimensions, or local objectives. Independently trained encoders may produce representations that encode equivalent task-relevant information while differing by rotations, reflections, or more general isometric embeddings. Such ambiguity reflects the inherent \emph{gauge freedom} of representation learning objectives, whereby equivalent solutions may differ by transformations that preserve geometric structure~\citep{wang2020understanding}. Consequently, forcing all agents into a single shared latent space can be unnecessarily restrictive and may degrade local representation quality or hinder knowledge transfer.

These observations motivate \emph{semantic alignment}: rather than requiring agents to adopt identical latent coordinates, one should learn how their local representation spaces are related. This perspective is supported by recent work on relative representations, which shows that independently learned latent spaces can support zero-shot communication when their relational structure is preserved~\citep{moschella2023relative}. 
Semantic alignment can support heterogeneous federated representation learning across supervised, semi-supervised, and self-supervised settings, as well as cooperative tasks in which agents exploit representations received from their neighbors. For example, it is relevant to multimodal sensing, collaborative robotics, distributed channel charting, transfer across heterogeneous models, and task-aware semantic communication, where latent information generated by one agent must remain useful to another despite differences in their internal representations.

A natural mathematical framework for these settings is a \emph{cellular sheaf} on graph, hereafter referred to as a \emph{network sheaf}~\citep{curry2014sheaves}. A network sheaf associates a local vector space with each node and linear restriction maps with graph edges, thereby encoding how information in neighboring spaces should be compared or transported. Recent work in graph signal processing and graph neural networks has shown that sheaf-based models can represent vector-valued data coupled through local consistency constraints~\citep{Hansen2018SpectralSheaves,di2024learning,di2025learning,Bodnar2022NeuralSheafDiffusion,Barbero2022SheafLaplacians}. This provides a natural abstraction for federated representation learning: each agent maintains its own latent space, while learnable edge-wise maps encode the geometric relations required to align representations across neighboring agents. Global consistency then emerges from local compatibility relations, rather than from a shared latent space.

\textbf{Contributions.}
We propose a sheaf-based framework for federated representation learning that enables semantic alignment across heterogeneous agents without requiring a shared latent space. Specifically, \emph{(i)} we formulate federated representation learning over a learnable network sheaf, where agent-specific latent spaces are related through edge-wise geometric transport maps; \emph{(ii)} we introduce a quadratic gluing penalty induced by a learnable sheaf Laplacian, evaluated on a small set of reference representations, hereinafter \emph{pilots}, to promote alignment while limiting communication; \emph{(iii)} we employ orthogonal and Stiefel restriction maps to capture rotations, reflections, and isometric embeddings across latent spaces of possibly different dimensions; \emph{(iv)} we develop a fully decentralized alternating algorithm, termed Sheaf-FRL, that combines local gradient updates with closed-form Procrustes updates of the transport maps, and establish convergence to first-order stationary points in deterministic and stochastic settings; and \emph{(v)} we demonstrate that Sheaf-FRL improves over baseline approaches in learning and representation transfer for collaborative supervised classification in semantic communication, under model and data heterogeneity.
The code implementing the proposed method and reproducing all the experiments is publicly available.\footnote{\href{https://github.com/SPAICOM/sheaf-based-federated-representation-learning.git}{https://github.com/SPAICOM/sheaf-based-federated-representation-learning.git}}

\section{Related Work}\label{sec:related}

In this section, we review the most relevant literature on federated learning, representation learning, and geometric alignment, and position our work with respect to these directions.

\spara{Federated learning under heterogeneity.} Federated learning (FL) trains a shared model through repeated local stochastic gradient updates followed by either server-side aggregation\nb\citep{DBLP:conf/aistats/McMahanMRHA17} or neighbor-based combination over a communication graph\nb\citep{scardapane2017framework}. A large body of work studies the impact of statistical and system heterogeneity, which can lead to client drift and unstable convergence, and proposes stabilizing mechanisms such as proximal regularization (FedProx)\nb\citep{DBLP:conf/mlsys/LiSZSTS20} and control variates (SCAFFOLD)\nb\citep{pmlr-v119-karimireddy20a}. More broadly, these approaches aim to improve optimization stability while maintaining a \emph{single global model}. In contrast, we focus on settings where enforcing a shared latent representation is inherently restrictive.

\spara{Federated multi-task learning.} 
Federated multi-task learning (FMTL) extends FL by training client-specific models while exploiting relationships across tasks or clients\nb\citep{smith2017federated}. 
Early formulations build upon regularized multi-task learning objectives, where task relationships are captured through shared covariance or relational structures\nb\citep{evgeniou2004regularized,evgeniou2005learning,jacob2008clustered,zhang2010convex,zhang2017learning}. 
In the federated setting, this leads to approaches that jointly optimize local models and their coupling structure via alternating or primal-dual procedures\nb\citep{smith2017federated,jaggi2014communication,ma2015adding}. 
A prominent class of methods leverages graph-based regularization, where a predefined or learned graph encodes similarities among clients and enforces smoothness of model parameters across the network. 
In particular, Laplacian-based approaches penalize discrepancies between neighboring models through total variation terms\nb\citep{dinh2022new,issaid2025tackling}. 
While these methods capture task relatedness, they operate in the \emph{parameter space} and typically assume homogeneous representations. 
In contrast, our approach operates in the \emph{representation space} and models relationships between clients up to learnable transformations, enabling alignment across heterogeneous latent spaces and accommodating different architectures or dimensions.

\spara{Multimodal Federated Learning.}
Multimodal Federated Learning (MFL)\nb\citep{che2023multimodal} extends classical FL to settings where clients possess data from multiple modalities (e.g., text, images, signals), enabling richer representations while preserving privacy. 
Existing MFL approaches are typically categorized based on data heterogeneity into congruent (shared modalities and feature spaces) and incongruent settings, the latter including vertical, transfer, and hybrid scenarios with partially overlapping or distinct modalities. 
In general, MFL optimizes a global objective by aggregating client-specific multimodal losses, often defined as weighted combinations of modality-wise contributions. 
However, most methods assume a shared latent space and rely on centralized or weakly decentralized protocols. 
Recent graph-based approaches such as CoMFed\nb\citep{badi2026} address heterogeneity by aligning compressed latent prototypes across clients, but still require a predefined common embedding space and enforce similarity via projections. 
In contrast, our sheaf-based framework does not assume a global latent space; instead, it enables decentralized learning in which each client maintains its own representation, and cross-client consistency is enforced through edge-dependent geometric alignments without information loss, naturally accommodating multimodal and heterogeneous settings.

\spara{Federated Representation Learning.}
Our problem formulation is closely related Federated Representation Learning (FRL), which aims to learn transferable representations across clients. 
In the supervised setting, existing approaches either enforce consistency via auxiliary losses on latent embeddings\nb\citep{Li2021MOON}, typically assuming a shared global model, or adopt a split architecture with a common encoder and client-specific heads\nb\citep{Collins2021FedRep,mclaughlin2024personalized}, still requiring a shared latent dimensionality. 
More recent methods based on prototype alignment\nb\citep{tan2022fedproto,tran2024fedntproto} partially address heterogeneity but continue to rely on a common embedding space and often centralized training. 
In contrast, our framework allows each client to learn its own latent space and enforces consistency in a decentralized manner through structured transport maps, so that classical FRL methods can be interpreted as a special case in which all latent spaces share the same dimensionality and are constrained to coincide via identity alignments.
Beyond supervised settings, unsupervised and self-supervised approaches operating on unlabeled data aim to improve representation transferability across clients, often leveraging contrastive learning frameworks\nb\citep{zhuang2021fedu,han2022fedx,zhang2023furl,miao2024contrastive,ghalkha2026sheafalign}.
Our proposed framework is not limited to any specific local loss or model architecture; thus, it is also amenable to the latter settings. 

\spara{Representation alignment, synchronization, and manifold methods.}
Alignment up to orthogonal transformations arises naturally when representations are identifiable only up to rotations or reflections, with the orthogonal Procrustes problem providing a classical closed-form solution~\citep{Schonemann1966Procrustes}. We leverage this solution in a federated and decentralized setting, both under homogeneous latent spaces (cf.~App.~\ref{app:algorithms}) and under stringent privacy constraints (cf.~App.~\ref{app:tradeoff}). Related ideas arise in group synchronization and connection-Laplacian methods~\citep{Singer2009AngularSync,SingerWu2011VDM,bandeira2013cheeger,thunberg2017sync}, which enforce consistency across pairwise transformations; unlike these approaches, where node signals and edge relations are given or observed, possibly with noise, we jointly learn latent representations at the nodes and alignment maps on the edges. To accommodate heterogeneous latent dimensions, we further consider semi-orthogonal maps on the Stiefel manifold, enabling isometric embeddings across representation spaces. This connects our formulation to optimization on matrix manifolds~\citep{absil2008optimization,edelman1998geometry} and federated optimization under manifold constraints~\citep{li2022riemannianFL}; however, these works focus on optimizing model parameters subject to geometric constraints, rather than learning alignment maps as a component of representation learning.

\spara{Positioning of this paper.}
Overall, our framework generalizes across all paradigms discussed above, which differ mainly in \emph{what} is assumed to be shared among agents---a single global model, a coupled set of parameters, a common multimodal embedding, or a shared latent space---rather than in the underlying goal of exploiting relatedness across agents.
Indeed, methods introduced within one paradigm, such as prototype alignment in FRL or graph-based regularization in FMTL, are often applicable to the others as well.
Rather than enforcing any of these forms of sharing, we learn edge-wise geometric transport maps that relate neighboring latent spaces and promote their semantic compatibility.
In the homogeneous setting, our formulation recovers the single global model of classical FL as a limiting case, together with orthogonal alignment, synchronization, and connection-Laplacian models.
More generally, it extends graph-based FMTL and Laplacian regularization through a \emph{learnable sheaf Laplacian} acting on latent representations rather than model parameters.
Additionally, it relaxes the shared, predefined embedding space assumed by MFL approaches such as CoMFed, and recovers FRL methods as the special case in which all latent spaces coincide via identity alignments.
Finally, pilot-restricted gluing enables communication-efficient alignment, while manifold-constrained transport maps admit decentralized optimization through local model updates and closed-form edge-wise Procrustes steps.

\section{Our Sheaf-based Representation Learning Framework}\label{sec:fpf}

\begin{figure}[t]
    \centering
    \includegraphics[width=\linewidth]{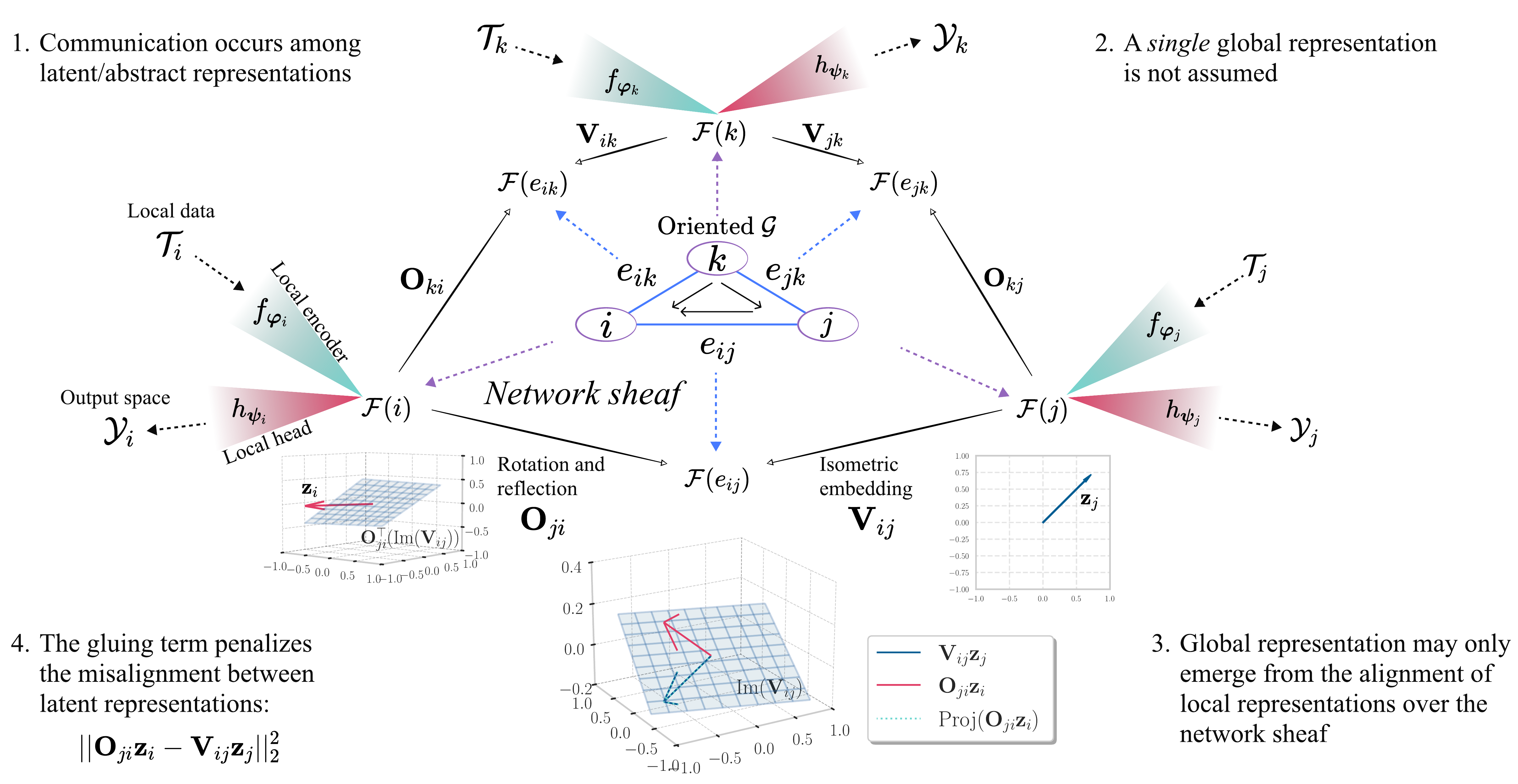}
    \caption{An illustration of our sheaf-based representation learning framework.
    In this example, we consider a supervised setting and equip local agents with task-specific heads.}
    \label{fig:sfrl}
\end{figure}

\Cref{fig:sfrl} illustrates the proposed sheaf-based representation learning framework. Let $\graph = (\vertexset, \edgeset)$ denote a finite undirected graph, where $|\vertexset| = N$ is the number of nodes. 
Each node corresponds to an agent, while edges encode structural relationships among agents. 
For example, an edge $(i,j) \in \edgeset$ may capture similarities in tasks, data distributions, modalities, or model architectures, thereby enabling beneficial cooperation between agents $i$ and $j$ through a physical communication channel. 

\spara{Local agents.}
Each agent $i\in\vertexset$ observes a local dataset $\mathcal{T}_i$ comprising $M_i$ samples in $\reall^{p_i}$, drawn from an underlying distribution $P_i$, as illustrated in \cref{fig:sfrl}. We denote the $n$-th sample by $\x_i^n$, with $n\in[M_i]$. 
Depending on the local learning paradigm, $\mathcal{T}_i$ may additionally include labels, side information, paired observations, or locally generated views of the samples.
Each agent learns a local representation model comprising a neural encoder
\begin{equation}\label{eq:local-encoder}
    f_{\bm\varphi_i}: \reall^{p_i}\to\mathcal{F}(i)\,,    
\end{equation}
where $\bm\varphi_i$ is the set of learnable parameters, and $\mathcal{F}(i)\cong\reall^{d_i}$ is the latent representation space of agent $i$.
When required by the local task, the encoder is followed by a personalized head parametrized by $\bm\psi_i$, i.e.,
\begin{equation}\label{eq:local-decoder}
    h_{\bm\psi_i}: \mathcal{F}(i)\to\mathcal{Y}_i\,, 
\end{equation}
where $\mathcal{Y}_i$ is an agent-dependent output space. 
We let $\bm\theta_i$ denote the collection of local trainable parameters, including $\bm\varphi_i$ and, when present, $\bm\psi_i$. 
Each agent $i \in \vertexset$ aims at minimizing an arbitrary local learning objective $\mathcal{L}_i(\bm\theta_i)$ based on the dataset $\mathcal T_i$.
For example, in the supervised setting, 
\begin{equation}\label{eq:local-loss-supervised}
    \mathcal{L}_i(\bm\theta_i)=\frac{1}{M_i}\sum_{n=1}^{M_i}\ell_i\!\left(h_{\bm\psi_i}\circ f_{\bm\varphi_i}(\x_i^n)\right)\,.
\end{equation}
Other choices naturally accommodate semi-supervised, unsupervised, or self-supervised learning. 

\spara{Communication over the learnable latent network sheaf.}
To formalize the relationships, and thus the communication, among the latent representation spaces $\mathcal{F}(i)$ introduced above, we adopt the framework of network sheaves \citep{bredon1997sheaf,curry2014sheaves}. 
Intuitively, those $\ns(i)$ spaces will be structured so as to encode consistency relations between the latent representations of neighboring agents. 
Specifically, a \emph{network sheaf} $\ns$ on $\graph$, valued in the category of finite-dimensional real vector spaces and linear maps, consists of the following assignments. To each node $i \in \vertexset$, it associates a $d_i$-dimensional real vector space $\ns(i)$, being the learnable latent representation space of agent $i$ and referred to as \textit{node stalk}. 
In this work we consider the general case where node stalks $\ns(i)$, $i \in \vertexset$, have different dimensionality $d_i$.
For instance, in \Cref{fig:sfrl}, the vector space $\ns(i)$ is three-dimensional, while the vector space $\ns(j)$ is two-dimensional. 
For each node stalk, the valuation corresponds to a latent embedding, e.g., those vectors $\z_i=f_{\bm\varphi_i}(\x_i)$ and $\z_j=f_{\bm\varphi_j}(\x_j)$ in \Cref{fig:sfrl}.
The collection of latent representations across the network, viz. $\z = \{\z_i\}_{i \in \vertexset}$, forms a 0-cochain.
Accordingly, the $0$-cochain space is $C^0(\graph, \ns) = \bigoplus_{i \in \vertexset} \ns(i)$. Similarly, to each edge $e_{ij} = (i,j) \in \edgeset$, the network sheaf \ns assigns a $d_{ij}$-dimensional real vector space $\ns(e_{ij})$, representing a shared space encoding pairwise compatibility between agents $i$ and $j$. 
The latter space is called \textit{edge stalk}.
The $1$-cochain is a collection $\y = \{\y_{e_{ij}}\}_{e_{ij} \in \edgeset}$ with $\y_{e_{ij}} \in \ns(e_{ij})$, and the corresponding $1$-cochain space is $C^1(\graph, \ns) = \bigoplus_{e_{ij} \in \edgeset} \ns(e_{ij})$. Additionally, for each incidence relation $i\to e_{ij}$, where $i$ is a vertex of $e_{ij}$, \ns specifies a linear \textit{restriction map}
\[
\ns_{i \to e_{ij}} : \ns(i) \to \ns(e_{ij})\,,
\]
which relates the local latent representation at node $i$ to the corresponding edge space and enables the communication among latent spaces. 
In this work, we focus on learnable restriction maps suitable for dealing with the well-known geometric-invariance of latent representations \citep{moschella2023relative}, and that preserve distances and angles.
These maps are orthogonal transformations and Stiefel matrices, where the orthogonal and Stiefel manifolds are
\begin{equation}\label{eq:manifolds}
    \ort{d}\coloneqq\{\myO \!\in\! \reall^{d \times d} \, \mid\, \myO^\top\!=\!\myO^{-1}\}\,, \quad \text{and} \quad 
    \stiefel{d}{k}\coloneqq\{\V \!\in\! \reall^{d \times k} \, \mid\, \V^\top\V\!=\!\identity_k,\, k<d\}\,.
\end{equation}
As we consider structured restriction maps that preserve the information but do not increase latent perturbations, for each edge $e_{ij}=(i,j) \in \edgeset$ with $d_i>d_j$, we set $\ns(e_{ij})=\ns(i)\cong \reall^{d_i}$, as illustrated in \Cref{fig:sfrl}.
This implies $\ns_{i\to e_{ij}} \!=\! \myO_{ji} \in \ort{d_i}$ and $\ns_{j \to e_{ij}} \!=\! \V_{ij} \!\in\! \stiefel{d_i}{d_j}$. 
Further discussion on the benefits of choosing the edge stalk dimensionality $d_{ij} = \max(d_i,d_j)$, as opposed to compressing, is provided in App.\nb\ref{app:add_discussion}.

\spara{Heterogeneous latent space geometry.}
In this setting, the encoders $f_{\bm\varphi_i}$ are trained independently, on different data, and possibly with different architectures.
Then, it is reasonable to expect the induced latent spaces $\ns(i)$ to exhibit heterogeneous geometry, in the sense of different scales and anisotropies \citep{pandolfo2026semasia, svendsen2026improving}.
This may occur even when dimensionalities $d_i$ coincide.
However, the above geometric restriction maps are effective for measuring inconsistency only when the latent representations $\z_i$ and $\z_j$ adhere to a common geometric reference on the edge stalk.
Thus, to make representations from distinct node stalks comparable once transported, each encoder is assumed to include a final whitening layer \citep{zhang2021stochastic} that approximately normalizes the latent representations to zero mean and identity covariance.
This normalization reduces the node-specific geometry to a common Euclidean reference.
However, the geometry of $\ns(i)$ is key to minimizing the local learning objective $\mathcal{L}_i(\bm\theta_i)$.
Hence, to let the local head exploit the local geometry, we equip the latter with an initial coloring layer that is assumed to perform the left inverse map of the normalization applied by the whitening layer of the encoder.

\spara{Centralized formulation under global alignment.}
The sheaf induces a coboundary operator
$\mathbf{\Delta}:C^0(\graph;\ns)\to C^1(\graph;\ns)$ defined edgewise as
\begin{equation}\label{eq:coboundary}
    (\mathbf{\Delta}\mathbf{z})_{e_{ij}}
=
\myO_{ji}\mathbf{z}_i
-
\V_{ij}\mathbf{z}_j,
\qquad e_{ij}=(i,j)\in \edgeset.
\end{equation}
Equipping $C^0$ and $C^1$ with canonical Euclidean inner products
$\langle \mathbf{z},\mathbf{z}'\rangle=
\sum_{i\in \vertexset}\mathbf{z}_i^\top\mathbf{z}_i'$ and
$\langle \mathbf{y},\mathbf{y}'\rangle=
\sum_{e_{ij}\in \edgeset}\mathbf{y}_{e_{ij}}^\top\mathbf{y}_{e_{ij}}'$,
the adjoint $\mathbf{\Delta}^\top$ is well defined.
The associated network sheaf Laplacian is
\begin{equation}\label{eq:ns_Laplacian}
    \mathbf{L}_{\ns}=\mathbf{\Delta}^\top \mathbf{\Delta}\,,    
\end{equation}
which is symmetric positive semidefinite.
A valuation $\z^\star \in C^0(\graph;\ns)$ belonging to $\ker(\mathbf{L}_\ns)$ is said \emph{global section} and it can be thought of as an assignment of latent representations to the node stalks that does not break local rules.
Indeed, from \cref{eq:ns_Laplacian} it is clear that $\bm\Delta\z^\star=\mathbf{0}$ which implies
\begin{equation}\label{eq:global-sec-conditions}
    \myO_{ji}\z_i^\star = \V_{ij}\z_j^\star\, \quad \text{for each } e_{ij} \in \edgeset\,;
\end{equation} 
where $\z_i^\star\in \reall^{d_i}$ and $\z_j^\star \in \reall^{d_j}$ are the components of $\z^\star$ at node $i$ and $j$, respectively.
Thus, a plausible strategy would be to constrain the learned latent representations across the network to be globally consistent with respect to the learned network sheaf \ns.
For simplicity, assume the existence of a set of shared sample indices across agents, and let $M \leq \min_{i\in\vertexset} M_i$ be the number of such comparable samples.
Then, according to the previous rationale we can pose the following centralized problem formulation 
\begin{equation}\label{eq:centralized_exact}
\begin{aligned}
\min_{\substack{\{\bm\theta_i=(\bm\varphi_i,\bm\psi_i)\}_{i\in\vertexset}\\
\{\myO_{ji} \in \ort{d_i},\,\V_{ij} \in \stiefel{d_i}{d_j} \}_{e_{ij}\in\edgeset}}}
\quad &
\sum_{i\in\vertexset} \mathcal{L}_i(\bm\theta_i)\\
\text{s.t.}\quad &
\z^n =\{f_{\bm\varphi_i}(\x_i^n)\}_{i\in\vertexset} \in \ker(\mathbf L_{\ns}),
\qquad \forall n \in [M]\,;
\end{aligned}
\end{equation}
where the network sheaf \ns is learned through its restriction maps under a known communication topology.
However, the constraints in Prob.\nb\eqref{eq:centralized_exact} might be too restrictive in practice.
Enforcing exact compatibility across heterogeneous agents is generally unrealistic, and, moreover, satisfying the constraint for all samples would in principle require a centralized solution, as it couples all node representations simultaneously. This limitation motivates the need for a more flexible and decentralized formulation that can accommodate inconsistencies while still promoting coherence across agents.
\section{Towards a Scalable and Decentralized Formulation}\label{sec:distributed-formulation}
The discussed limitations of Prob.\nb\eqref{eq:centralized_exact} in realistic settings motivate the introduction of a relaxed, distributed formulation.
To obtain a tractable formulation suitable for distributed implementation, we replace the hard constraints in Prob.\nb\eqref{eq:centralized_exact} with a soft penalty based on the sheaf total variation, which promotes geometric consistency across agents without enforcing exact agreement:
\begin{equation}\label{eq:total_variation}
    \mathcal{TV}(\z) \coloneqq \norm{\bm\Delta \z}_2^2 = \z^\top\mathbf{L}_\mathcal{F}\z= \sum_{e_{ij} \in \edgeset} \norm{\myO_{ji}\z_i - \V_{ij} \z_j}_2^2\,.
\end{equation}
From \cref{eq:global-sec-conditions}, it is easy to see that constraining the learned \z to be a global section is equivalent to zeroing \cref{eq:total_variation}.
Additional discussion on the geometric consistency promoted by the action of the restriction maps, as well as the use of the total variation penalty, is provided in App.\nb\ref{app:add_discussion}.

\spara{Semantic embedding and orientation.}
We can recast the local terms in \cref{eq:total_variation} as
\begin{equation}\label{eq:local_gluing}
    \norm{\myO_{ji}\z_i - \V_{ij} \z_j}_2^2 \stackrel{(a)}{=} \norm{\z_i - \V_{ij} \z_j}_2^2\,;
\end{equation}
where in $(a)$ we exploit the transitive action of the orthogonal group on the Stiefel manifold, thus re-parameterizing accordingly $\V_{ij}=\myO_{ji}^\top\V_{ij}$.
\Cref{eq:local_gluing} highlights that perfect alignment, and thus null total variation, corresponds to the case in which the higher-dimensional latent space is an isometric embedding of a lower-dimensional latent space.
While in general this condition does not always hold (see the discussion in App.\nb\ref{app:add_discussion} on geometric consistency), the restriction maps in our work adheres to the \emph{semantic embedding principle} \citep{d2025calsep,dacunto2026}: informally, from a probabilistic perspective, the semantics of the lower-dimensional node stalk $\ns(j)$ is preserved when embedded into the higher-dimensional $\ns(i)$.  

Another distinguishing feature of our framework is that the restriction map reparameterization in \Cref{eq:local_gluing} induces a natural orientation for \graph which we exploit in the sequel to reduce the communication cost.
Specifically, as exemplified in \Cref{fig:sfrl}, we endow \graph with the \emph{embedding orientation}, that is, from nodes with lower-dimensional latent spaces to those with higher-dimensional ones, accordingly to the direction in which semantics is preserved.
Further details on the special case of latent representation spaces with homogeneous dimensionality are provided in App.\nb\ref{app:add_discussion}. 

\spara{Incoming and outgoing embedding maps.}
By using \cref{eq:local_gluing}, the total variation in \cref{eq:total_variation} can be rewritten as
\begin{equation}\label{eq:oriented_total_variation}
    \mathcal{TV}(\z) =\sum_{e_{ij}\in \edgeset} \norm{\z_i - \V_{ij} \z_j}_2^2\,;
\end{equation}
where for the edge $e_{ij}=(i,j)$, node $i$ is the head and $j$ the tail, in accordance with the relation $d_i>d_j$. 
Ties occurring when $d_i=d_j=d$ are instead broken arbitrarily:
since in this case $\V_{ij}=\V_{ji}^\top \in \ort{d}$, the edge contribution in \cref{eq:oriented_total_variation} is invariant to the chosen orientation.
Throughout, we therefore adopt the convention that for $e_{ij}$, the first index $i$ is always the head. 

Now, exploiting the orientation fixed above,
for each $i \in \vertexset$, we can distinguish between two sets of neighbors.
Specifically, $\vertexset(i)^-$ collects the neighbors $j$ for which $i$ is the head, and $\vertexset(i)^+$ those for which $i$ is the tail, so that $d_j\leq d_i$ on $\vertexset(i)^-$ and $d_j\geq d_i$ on $\vertexset(i)^+$, with equality holding only on tied edges. 
The overall neighborhood is thus $\vertexset(i)=\vertexset(i)^-\cup\vertexset(i)^+$.
Taking advantage of this distinction, starting from \cref{eq:oriented_total_variation}, we have
\begin{equation}\label{eq:tv-two-forms}
    \begin{aligned}
        \mathcal{TV}(\z) &= \sum_{i \in \vertexset}\sum_{j \in \vertexset(i)^-} \norm{\z_i - \V_{ij} \z_j}_2^2\\
        &=\dfrac{1}{2} \sum_{i\in \vertexset}\left[\sum_{j \in \vertexset(i)^-} \norm{\z_i - \V_{ij} \z_j}_2^2 + \sum_{j \in \vertexset(i)^+} \norm{\z_j - \V_{ji} \z_i}_2^2\right]\,;
    \end{aligned}
\end{equation}
which highlights that the contribution of node $i$ to the total variation is made of two terms with different geometric meanings:
\begin{equation}\label{eq:tv_at_i}
    \mathcal{TV}(\z)\at{i}=\dfrac{1}{2}\underbrace{\sum_{j \in \vertexset(i)^-} \norm{\z_i - \V_{ij} \z_j}_2^2}_{\text{Incoming Embedding}} + \dfrac{1}{2} \underbrace{\sum_{j \in \vertexset(i)^+} \norm{\z_j - \V_{ji} \z_i}_2^2}_{\text{Outgoing Embedding}}\,.
\end{equation}
Leveraging \Cref{eq:tv_at_i}, for each node $i$, the matrices $\V_{ij}$ in the first term are referred to as the \emph{incoming} embedding maps, while the matrices $\V_{ji}$ in the second term are referred to as the \emph{outgoing} embedding maps. 
As detailed in the sequel, the expression in \Cref{eq:tv_at_i} is useful for computing local updates and reducing the communication cost in our proposed decentralized algorithm.

\spara{Scalable gluing penalty and decentralized problem.}
For scalability and computational aspects, to enforce geometric consistency via the sheaf total variation, we only use a subset of reference samples chosen among those $\z_i^n$ (pilots), for each $i \in \vertexset$. 
Let
\[
\mathcal{A} \subset \{1,\dots,M_{\mathrm{min}}\},
\qquad |\mathcal{A}| = K \ll M_{\mathrm{min}},
\]
denote fixed pilot indices, where $M_{\mathrm{min}}=\min_iM_i$.
For the sake of exposition, we assume the datasets $\{\mathcal{T}_i\}$ are aligned.
For each agent $i$, define the pilot feature matrix
\begin{equation}
\label{eq:anchor_features}
\A_{i}(\boldsymbol{\varphi}_i)
=
\Big[
f_{\boldsymbol{\varphi}_i}\left(\x_i^{k}\right)
\Big]_{k\in \mathcal{A}} = \left[\z_i^k\right]_{k\in \mathcal{A}}
\in \reall^{d_i\times K}.
\end{equation}
Different strategies can be used to select the pilot set $\mathcal{A}$, trading off semantic reliability, geometric coverage, and communication efficiency; we discuss several of such strategies in App.\nb\ref{app:pilots}.
The usage of pilots also highlights that we enforce a weaker notion of alignment among latent spaces. 
In other words, while a global agreement may occur on these pilots, the local latent representations for the remaining samples may not, in principle, be perfectly alignable via the learned restriction maps. 
This is consistent with the discussion above, particularly with the fact that we neither assume nor aim for a single global latent representation.

Now, exploiting \Cref{eq:tv-two-forms} and introducing a penalty hyperparamer $\lambda>0$, our considered gluing penalty decouples as
\begin{equation}\label{eq:gluing_penalty}
    \mathcal{R}_{\mathcal{A}}(\{\bm\varphi_i\}, \{\V_{ij}\}, \{\V_{ji}\})
    = \sum_{i=1}^{N} \mathcal{R}_{\mathcal{A}}\at{i}(\bm\varphi_i, \{\V_{ij}\}, \{\V_{ji}\})\,;
\end{equation}
where
\begin{equation}\label{eq:local_gluing_penalty}
    \begin{aligned}
        \mathcal{R}_{\mathcal{A}}\at{i}(\bm\varphi_i, \{\V_{ij}\}, \{\V_{ji}\})
        &=\dfrac{\lambda_i}{2K}\Bigg[
        \sum_{j \in \mathcal{N}(i)^-} \frob{\A_i(\bm\varphi_i)-\V_{ij}\A_j(\bm\varphi_j)}^2\\
        &\quad + \sum_{j \in \mathcal{N}(i)^+} \frob{\A_j(\bm\varphi_j)-\V_{ji}\A_i(\bm\varphi_i)}^2
        \Bigg]\,;
    \end{aligned}
\end{equation}
with $\lambda_i=\lambda/d_i$ and $\lambda>0$.
At this point, we are ready to pose the sheaf-based federated representation learning problem.

\begin{problem}[Sheaf-based Federated Representation Learning]\label{prob:sfrl}
Given 
(i) an undirected graph $\graph=(\vertexset,\edgeset)$ endowed with an orientation induced by semantic embeddings, 
(ii) a collection of $N$ local training datasets $\{\mathcal{T}_i\}_{i\in \vertexset}$ drawn from unknown distributions $P_i$, 
and (iii) a set of reference pilot indices $\mathcal{A}$, the goal of sheaf-based federated representation learning (SFRL) is to learn, for each node $i$, a local model $\bm\theta_i=(\bm\varphi_i,\bm\psi_i)$ together with incoming and outgoing embeddings $\{\V_{ij} \in \stiefel{d_i}{d_j}\}_{(i,j)\in \edgeset}$ and $\{\V_{ji} \in \stiefel{d_j}{d_i}\}_{(j,i)\in \edgeset}$, respectively. 
These embedding matrices align the local latent space $\ns(i)$, using reference pilots indexed by $\mathcal{A}$, with those of the neighbors in $\vertexset(i)$. This is achieved by solving in a decentralized manner 
\begin{equation}
\label{eq:global_problem}
\min_{\substack{\{\boldsymbol{\theta}_i=(\bm\varphi_i,\bm\psi_i)\}\\
\{\V_{ij} \in \stiefel{d_i}{d_j}\}\\
\{\V_{ji} \in \stiefel{d_j}{d_i}\}}}
\quad 
\sum_{i\in \vertexset}
\mathcal{L}_i(\bm\theta_i)
+ \mathcal{R}_{\mathcal{A}}(\bm\varphi_i, \{\V_{ij}\}, \{\V_{ji}\})\,;
\tag{SFRL}
\end{equation}
while preserving agents' privacy and reducing communication costs over the network.
\end{problem}

Prob.\nb\eqref{eq:global_problem} is a regularized nonconvex optimization problem, which couples smooth local neural parameters optimization with manifold-constrained edge transports. 
It can be interpreted as a scalable decentralized relaxation of the centralized Prob.\nb\eqref{eq:centralized_exact}.

\section{The Sheaf-FRL Algorithm}\label{sec:algorithmic_solution}

We solve Prob.\nb\eqref{eq:global_problem} in a decentralized manner, by adopting an alternating minimization strategy.
The latter consists of the following two updates.

\subsection{Step 1: Isometric Embedding Update}

Recall that $\mathcal{N}(i)\subseteq \vertexset\setminus\{i\}$ denotes the set of neighbors of node $i$, which splits into $\vertexset(i)^-$ and $\vertexset(i)^+$, collecting the neighbors $j$ for which $i$ is the head and the tail, respectively. 
Given the neural parameters $\{\bm\theta_i^t=(\bm\varphi_i^t,\bm\psi_i^t)\}$ updated at communication round $t$, each node $i\in\mathcal{N}$ updates its alignment map with its neighbors $j\in\mathcal{N}(i)$. Let
\[
\A_{i}^{t}
=
\A_{i}\left(\bm{\varphi}_i^{t}\right)=\Big[
f_{\bm\varphi^t_i}\left(\x^{k}_i\right)
\Big]_{k\in \mathcal{A}}
\]
denote the pilot feature matrix at communication round $t$.  
Since the local gluing penalty in \cref{eq:local_gluing_penalty} separates over the oriented edges, each node $i$ can update its alignment maps by solving local problems involving only its neighbors $j \in \mathcal{N}(i)$. In particular, for each $e_{ij} \in \edgeset$, where $i$ is the head and $j$ the tail, node $i$ optimizes the corresponding incoming and outgoing embedding maps associated with its neighborhood. This leads to two symmetric alignment problems corresponding to the incoming and outgoing embedding terms. Specifically, we consider
\begin{align}
    \V_{ij}^{t} &= \argmin_{\V_{ij} \in \stiefel{d_i}{d_j}} \;\; \frob{\A_{i}^{t} - \V_{ij} \A_j^{t}}^2 \,, \quad \forall\, j \in \mathcal{N}(i)^- \,, 
    \tag{P2a}\label{prob:alignment_V_ij}\\
    \V_{ji}^{t} &= \argmin_{\V_{ji} \in \stiefel{d_j}{d_i}} \;\; \frob{\A_{j}^{t} - \V_{ji} \A_i^{t}}^2 \,, \quad \forall\, j \in \mathcal{N}(i)^+ \,.
    \tag{P2b}\label{prob:alignment_V_ji}
\end{align}
The objective of Prob.\nb\eqref{prob:alignment_V_ij} (and analogously of \eqref{prob:alignment_V_ji}) can be recast as:
\begin{equation}\label{eq:local_gluing_expansion}
    \frob{\A_{i}^{t} - \V_{ij} \A_j^{t}}^2 = \tr{\A_{i}^{t}\A_{i}^{{t}^\top}}+\tr{\A_{j}^{t}\A_{j}^{{t}^\top}} -2 \tr{\A_{i}^{t}\A_{j}^{{t}^\top}\V_{ij}^\top}\,.
\end{equation}
Minimizing \Cref{eq:local_gluing_expansion} thus corresponds to maximizing the third term.
Denoting by $\U \bm\Sigma \W^\top$ the thin SVD of the cross-covariance term $\A_{i}^{t}\A_{j}^{{t}^\top}$, the solution to \eqref{prob:alignment_V_ij} is given by 
\begin{equation}\label{eq:procrustes_solution_V}
    \V_{ij}^{t} = \U \W^\top\,, \quad \text{where} \quad \U \in \stiefel{d_i}{d_j} \text{ and } \W \in \ort{d_j}\,.
\end{equation}
Similarly, the solution to \eqref{prob:alignment_V_ji} is obtained from the thin SVD of $\A_{j}^{t}\A_{i}^{{t}^\top}$.
For each node $i \in \vertexset$, the updates in \cref{eq:procrustes_solution_V} are computed locally once its neighbors $j \in \mathcal{N}(i)$ share their feature evaluations $\{\A_{j}^{t}\}$.
Importantly, the orientation is exploited by $i$ to distinguish the edges for which it acts as the head, in which case the incoming embedding map $\V_{ij}^t$ must be computed via \eqref{prob:alignment_V_ij}, from those for which it acts as the tail, where the outgoing embedding map $\V_{ji}^t$ is updated via \eqref{prob:alignment_V_ji}.

\begin{remark}\label{rem:orthogonal_alignment}
In the homogeneous case, as well as on tied edges of the heterogeneous setting, $\stiefel{d_i}{d_j}$ coincides with $\ort{d}$ and Probs.\ \eqref{prob:alignment_V_ij}--\eqref{prob:alignment_V_ji} reduce to canonical orthogonal Procrustes problems; 
accordingly, the thin SVD reduces to the full SVD, so that tied edges require no separate treatment.
The solutions are attained in closed form by setting $\myO_{ij}^{t}=\U \W^\top$ and $\myO_{ji}^{t}=\myO_{ij}^{t^\top}$, where the factors are obtained from the classical SVD of the cross-covariance matrix $\A_i^{t}\A_j^{{t}^\top}$. Conditions ensuring identifiability of the transport maps, both in the homogeneous and heterogeneous settings, are discussed in App.\nb\ref{app:identifiability}.
\end{remark}

\subsection{Step 2: Gradient Update of Neural Parameters}

In Prob.~\eqref{eq:global_problem}, each agent $i$ updates its local parameters $\bm\theta_i$ through a gradient step that minimizes the global objective with respect to $\bm\theta_i$, while keeping the neighboring parameters $\{\bm\theta_j^t\}_{j\in\mathcal N(i)}$ fixed at the current iteration. 
The regularization term, instead, depends exclusively on the encoder parameters $\bm\varphi_i$. Therefore, starting from~\cref{eq:local_gluing_penalty}, for each agent $i\in\vertexset$ we obtain
\begin{equation}\label{eq:gradient_gluing_penalty}
    \begin{aligned}
\nabla_{\boldsymbol{\varphi}_i}\mathcal R_{\mathcal{A}}\at{i}
=&\frac{\lambda_i}{K}\Bigg[\sum_{j\in \vertexset(i)^-}\sum_{k\in \mathcal{A}}\left( \nabla_{\boldsymbol{\varphi}_i}f_{\boldsymbol{\varphi}_i}(\mathbf{x}_i^{k})\right)^\top\left(f_{\boldsymbol{\varphi}_i}\left(\mathbf{x}_i^{k}\right)-\V_{ij}f_{\boldsymbol{\varphi}_j}\left(\mathbf{x}_j^{k}\right)\right) \\
&- \sum_{j\in \vertexset(i)^+}\sum_{k\in \mathcal{A}}\left(\nabla_{\boldsymbol{\varphi}_i}f_{\boldsymbol{\varphi}_i}(\mathbf{x}_i^{k})\right)^\top\V_{ji}^\top\left(f_{\boldsymbol{\varphi}_j}\left(\mathbf{x}_j^{k}\right)-\V_{ji}f_{\boldsymbol{\varphi}_i}\left(\mathbf{x}_i^{k}\right)\right)\Bigg]\\
\stackrel{(a)}{=}& \frac{\lambda_i}{K} \sum_{j\in \mathcal N(i)}\sum_{k\in \mathcal{A}}\left(\nabla_{\boldsymbol{\varphi}_i}f_{\boldsymbol{\varphi}_i}(\mathbf{x}_i^{k})\right)^\top \left(f_{\boldsymbol{\varphi}_i}\left(\mathbf{x}_i^{k}\right)-\V_{ij}f_{\boldsymbol{\varphi}_j}\left(\mathbf{x}_j^{k}\right)\right)\,;
    \end{aligned}
\end{equation}
where $\nabla_{\boldsymbol{\varphi}_i} f_{\boldsymbol{\varphi}_i}(\x_i^{k}) \in \mathbb{R}^{|\boldsymbol{\varphi}_i|\times d_i}$ denotes the Jacobian of the feature map $f_{\boldsymbol{\varphi}_i}$ with respect to the parameter vector $\boldsymbol{\varphi}_i$, evaluated at $\z_i^{k}$;
and in $(a)$, for $j \in \vertexset(i)^+$, we used that $\V_{ij} = \V_{ji}^\top$ and $\V_{ij}\V_{ji} = \mathbf{I}_{d_i}$, which follow from the orthonormality of the columns of $\V_{ji} \in \stiefel{d_j}{d_i}$. This shows that both incoming and outgoing contributions admit a unified expression. Furthermore, let
\begin{equation}\label{eq:ri}
\mathbf{r}_i=
\begin{bmatrix}
\nabla_{\boldsymbol{\varphi}_i}\mathcal R_{\mathcal{A}}\at{i} \\
\zeros_{|\bm\psi_i|}
\end{bmatrix}    
\end{equation}
be the contribution of the sheaf-based regularization to the gradient of the local neural parameters $\bm\theta_i$. Then, for each agent $i\in \vertexset$, the neural parameters are updated locally via
\begin{equation}
\label{eq:theta_update_Q}
\boldsymbol{\theta}_i^{t+1}
=
\boldsymbol{\theta}_i^{t}
-
\eta_\theta
\left(
\nabla_{\boldsymbol{\theta}_i}
\mathcal{L}_i(\bm\theta_i)
+
\mathbf{r}^t_i
\right),
\end{equation}
where $\eta_\theta>0$ is the stepsize, and $\mathbf{r}^t_i$ is the gradient term in \eqref{eq:ri} evaluated at time $t$.

Gradient computations in~\eqref{eq:theta_update_Q} are fully decentralized. Specifically, each agent $i$ can evaluate the gradient with respect to its local parameters $\bm\theta_i$ using only locally available information together with the pilots' latent representations $\{\mathbf A_j^t\}_{j\in\mathcal N(i)}$ received from its neighboring agents. Since these pilot representations are already exchanged during Step~1 to update the restriction maps, no additional communication is required to evaluate the regularization gradient in~\eqref{eq:gradient_gluing_penalty}. The resulting decentralized algorithm, termed \emph{Sheaf-FRL}, is summarized in Appendix~\ref{app:algorithms} for both the heterogeneous and homogeneous settings. A detailed convergence analysis, covering both deterministic and stochastic settings, is presented in Appendix~\ref{sec:Convergence_analysis}. Finally, the proposed framework naturally accommodates accelerated first-order methods: momentum-based and adaptive schemes, such as Nesterov acceleration or Adam, can be directly incorporated into the local parameter updates~\eqref{eq:theta_update_Q} without affecting the decentralized structure of the algorithm.

\begin{remark}[Communication cost]
\label{rem:communication_cost}
At each iteration, every node $i$ broadcasts its pilot feature matrix $\mathbf{A}_i^t \in \mathbb{R}^{d_i \times K}$ to its neighbors in $\mathcal{N}(i)$, transmitting $\mathcal{O}(d_i K)$ scalar values. 
Therefore, the overall communication volume per iteration scales as $\mathcal{O}\!\left(\sum_{i\in\vertexset} d_i K\right)$. Importantly, only pilot representations are exchanged, while model parameters remain local. 
\end{remark}

\begin{remark}[Privacy]\label{rem:privacy}
In some settings, sharing latent representations may raise privacy concerns, as they can be vulnerable to reconstruction and inference attacks, including model inversion~\citep{fredrikson2015model}, membership inference~\citep{shokri2017membership}, and attribute inference~\citep{melis2019exploiting}. This can be mitigated in our framework by exchanging only representations transformed via the restriction maps, thus avoiding direct exposure of local latent vectors. Nevertheless, the cost to pay to enforce strict privacy requirements is additional local computation and higher communication cost.
The trade-off between privacy, communication, and local computational cost is discussed in App.\nb\ref{app:tradeoff}.
\end{remark}

\begin{remark}[Local computational cost of restriction maps update]\label{rem:comp-cost-rm}
In principle, the number of alignment updates could be halved, since for each edge $e_{ij}$ the isometric embedding is only required at the lower-dimensional node. 
However, this would require each node $i$ to transmit its transformed embedded representations $\V_{ji}\A_i^t$, whose dimension is at least $d_i$, in order to evaluate \cref{eq:gradient_gluing_penalty}. 
This would increase the communication cost. 
To preserve communication efficiency (cf.~\cref{prob:sfrl}), we instead exchange the raw pilot features $\A_i^t$ and let both endpoints compute the corresponding restriction maps locally. 
This design shifts the burden from communication to computation, which is well aligned with the increasing computational capabilities of modern edge devices. 
In the homogeneous case, where $d_i = d$ for all $i \in \vertexset$, this trade-off can be further improved. 
In this setting, sharing transported representations $\myO_{ji}\A_i^t$ does not increase communication, and the number of SVD computations can be halved, as detailed in App.~\ref{app:algorithms} and Alg.~\ref{alg:sheaf_frl_hom}.
\end{remark}

\section{Collaborative supervised classification with semantic communication}\label{sec:numerical_results}
As an application, we test our proposed Sheaf-FRL algorithm in a collaborative and decentralized classification task in the setting of semantic communication, where agents exchange compressed latent representations of data \citep{gunduz2022beyond,barbarossa2023semantic,strinati2024goal, pandolfo2025latent,grimaldi2025learning}. Each agent minimizes a local cross-entropy loss $\mathcal{L}_i(\bm\theta_i)\coloneq \mathrm{CE}(\bm\theta_i)$, while the representation alignment across agents is promoted via the sheaf regularization term.
At a high level, the goal for the agents is to improve their classification accuracy on both \emph{(i)} their private \mulberry{local latent representations}, and \emph{(ii)} the latent representations received by their corresponding \teal{neighboring agents} $j \in \vertexset(i)$.
Accordingly, as customary in this application setting, we monitor \emph{average private} and \emph{average communication accuracies}.
Denote by $\mathcal{T}_i^\text{test}$ the test set for agent $i$ consisting of $M_i^\text{test}$ pairs $\{(\x_i^{\text{test},n}, y_i^{\text{test},n})\}_{n \in M_i^\text{test}}$.
Then,
\begin{equation}\label{eq:monitored_metrics}
  \begin{aligned}
   A_i^\text{priv} &= \dfrac{1}{M_i^\text{test}} \sum_{n=1}^{M_i^\text{test}}\ones[\arg \max_{c \in \mathcal{Y}} [h_{\bm\psi_i} \circ \mulberry{f_{\bm\varphi_i}(\x_i^{\text{test}, n})}]_c=y_i^{\text{test},n}]\,\\
   A_i^\text{comm} &= \dfrac{1}{|\vertexset(i)|} \Bigg( \sum_{j \in \vertexset(i)^-} \dfrac{1}{M_j^\text{test}} \sum_{n=1}^{M_j^\text{test}}\ones[\arg \max_{c \in \mathcal{Y}}[h_{\bm\psi_i} \circ \V_{ij} \circ \teal{f_{\bm\varphi_j}(\x_j^{\text{test}, n})}]_c = y_j^{\text{test},n}] +\\
   & \sum_{j \in \vertexset(i)^+} \dfrac{1}{M_j^\text{test}} \sum_{n=1}^{M_j^\text{test}}\ones[\arg \max_{c \in \mathcal{Y}}[h_{\bm\psi_i} \circ \V_{ji}^\top \circ \teal{f_{\bm\varphi_j}(\x_j^{\text{test}, n})}]_c = y_j^{\text{test},n}]\Bigg)\,.
  \end{aligned}
\end{equation}

\spara{Model heterogeneity.}
We consider $N=15$ agents, each consisting of \emph{(i)} a CNN local encoder and \emph{(ii)} an MLP local decoder.
This choice of qualitative architectural similarity is driven by the task homogeneity of the application setting.
Heterogeneity across models is enforced by varying architectural characteristics (encoder and decoder widths) and the dropout hyper-parameter (cf. \cref{tab:agent_zoo}).

\spara{Data heterogeneity.}
We use MNIST as a global dataset $\mathcal{T}$ with label space $\mathcal{Y} = \{0, \dots, 9\}$.
We first set aside a global pilot set $\mathcal{A}$, a uniformly sampled $10\%$ subset of $\mathcal{T}$, held out before any agent-level partitioning; $\mathcal{A}$ is shared, unmodified, across all agents and used exclusively to align/communicate their representations. The remaining $90\%$ is partitioned disjointly across the $N$ agents to create heterogeneity: each agent $i$ has a target class set $\mathcal{C}_i \subset \mathcal{Y}$, and its local label distribution is a convex combination of two uniform distributions, the first over its target classes and the second over all classes, viz.
\begin{equation}\label{eq:local-distribution-shift}
  P_i(y) \coloneq s \, \mathrm{Unif}\{\mathcal{C}_i\} + (1-s)\,\mathrm{Unif}\{\mathcal{Y}\}\,, \qquad \forall i \in [N]\,;
\end{equation}
where $s \in [0, 1]$ is the distribution-shift strength. 
Samples of each class are split across agents in proportion to $\{P_i(y)\}_{i=1}^N$.
Each agent's local pool is then split $80\%/10\%/10\%$ into train/val/test, preserving its label skew across all three splits.

\spara{Network topology.}
Agents communicate over an undirected, unweighted graph $\graph \coloneqq (\vertexset,\edgeset)$, with density level $40\%$, built to be connected and to reflect the degree of class overlap across agents, as described below.
Each agent $i \in [N]$ is assigned to a distinct node, so that $|\vertexset|=N$.
The edge set \edgeset is constructed from the target-class sets $\{\mathcal{C}_i\}_{i\in[N]}$ as follows. 
For every pair $(i,j)$, let $w_{ij} \coloneqq |\mathcal{C}_i \cap \mathcal{C}_j|$ denote the number of target classes shared by agents $i$ and $j$, collected into the weight vector $\w$.
These weights define the fully-connected, weighted graph $\widetilde{\graph} \coloneqq (\vertexset, \widetilde{\edgeset}, \w)$, with $\widetilde{\edgeset} \coloneqq \{(i,j) \in [N]\times[N] \,\mid\, i \neq j\}$.
We then extract the maximum-weight spanning tree $\hat{\graph} \coloneqq (\vertexset, \hat{\edgeset}, \hat{\w})$ from $\widetilde{\graph}$, and initialize \edgeset with the (unweighted) edges of $\hat{\edgeset}$, which guarantees connectivity.
We next add edges from $\widetilde{\edgeset}\setminus \hat{\edgeset}$, in decreasing order of weight, until the target $40\%$ density is reached.
Finally, each edge is assigned an orientation, consistent with the orientation rule in \cref{sec:distributed-formulation}.

\spara{Baselines.}
This heterogeneous setup precludes direct comparison with classical federated methods based on model or submodel averaging \citep{scardapane2017framework, diao2021heterofl, li2021lotteryfl}, or shared representations \citep{tan2022fedproto, setayesh2026toward, Collins2021FedRep,liang2020think}.
Indeed, these methods assume homogeneous architectures or shared latent spaces.
We therefore compare against methods explicitly designed for heterogeneous settings, namely \emph{ComFed} \citep{badi2026} and \emph{Sheaf-FMTL} \citep{issaid2025tackling}.
We also consider a \emph{non-cooperative} baseline in which models are trained independently and their latent spaces are aligned only after training, providing a natural lower bound. All baselines use the same communication network topology as Sheaf-FRL.
Except for ComFed, which learns alignment maps during training, all baselines require post-hoc alignment to enable semantic communication.
Specifically, for each edge $(i,j)$ we solve \eqref{prob:alignment_V_ij} and \eqref{prob:alignment_V_ji} in closed-form via \Cref{eq:procrustes_solution_V}, fitting isometric maps between whitened latent spaces using the same pilot sets as in Sheaf-FRL.
In our experiments, the whitening for these baselines is pursued through classical zero-phase component analysis \citep{kessy2018optimal} fitted on the latent representations from the private training set, independently for each agent.

\begin{figure}[t]
\centering
\includegraphics[width=\linewidth]{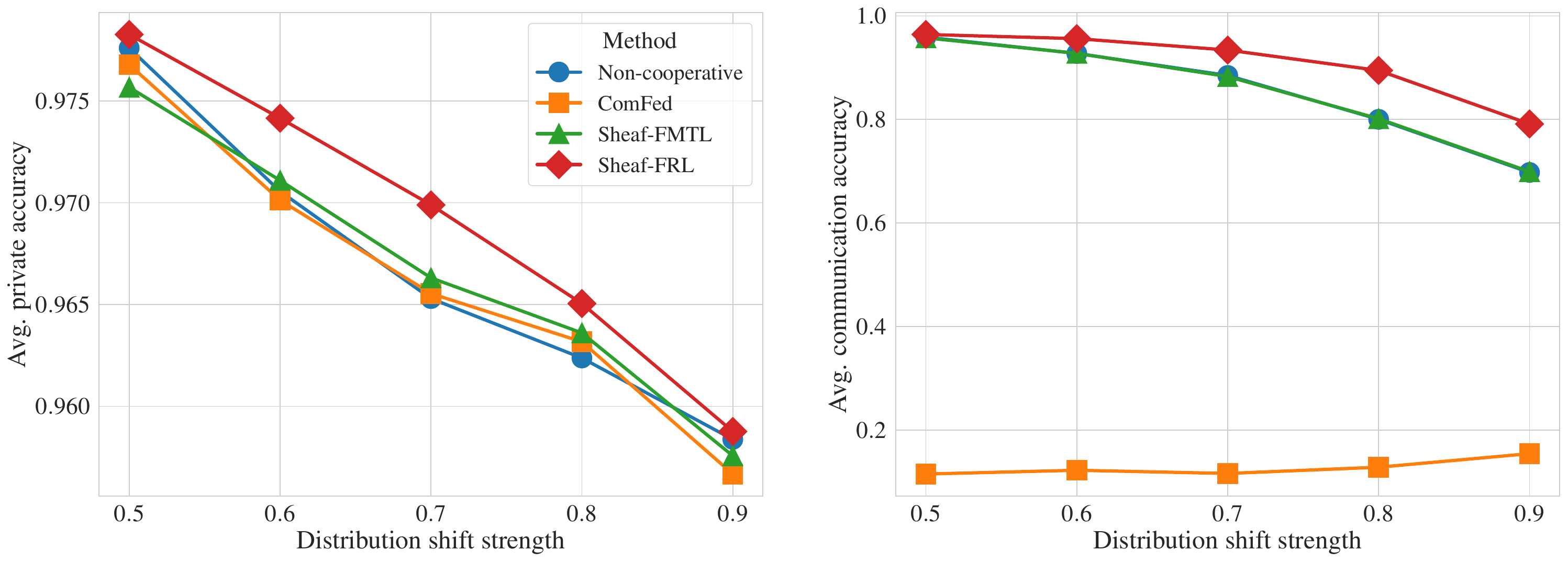}
\caption{
Average private accuracy (left) and average communication accuracy (right) against the distribution-shift strength $s$, for the collaborative supervised classification task with $15$ agents. 
Markers denote the simple average across agents for the private accuracy, and the degree-weighted average for the communication accuracy, to account for the imbalance in the agents' degree distribution.
}
\label{fig:multiagent_shift}
\end{figure}

\spara{Results.}
\Cref{fig:multiagent_shift} shows the metrics in \cref{eq:monitored_metrics} against the distribution-shift parameter $s$, which controls the level of data heterogeneity according to \cref{eq:local-distribution-shift}.
Overall, Sheaf-FRL consistently outperforms all baselines on both metrics.
Looking at the average communication accuracy, the performance gap widens as $s$ increases, suggesting that our framework is more effective at learning latent spaces that are 
\emph{(i)} well aligned across agents, thus enhancing transferability, 
and \emph{(ii)} more relevant to the downstream task, thus being semantically meaningful.
Point \emph{(ii)} is further supported by the average private accuracy, where Sheaf-FRL still compares favorably with the baselines, although the margin is less pronounced.
Among the baselines, ComFed performs poorly in terms of average communication accuracy, while being comparable to the others in private accuracy. 
This suggests that the joint learning of general, unstructured alignment maps and compressed representations of a single shared latent space is not well suited for transferability.
Sheaf-FMTL, in turn, shows no consistent improvement in communication accuracy over the non-cooperative baseline. 
This stems from a limitation in how it parameterizes the restriction maps: 
since Sheaf-FMTL connects parameter spaces rather than latent representation spaces, it inherits their much higher dimensionality. 
The resulting memory overhead requires applying a high compression factor to the edge stalks, which negatively affects the diffusion of information across the network (Appendix \ref{app:exp_details} for further details).

\subsection{Robustness to semantic compression}
An important feature in semantic communication is robustness to latent-space compression \citep{gunduz2022beyond,barbarossa2023semantic}. This can be understood as the ability of methods to extract semantically meaningful information to be communicated. To investigate this aspect, we restrict our focus to two agents, say $i$ and $j$, connected by an edge $(i,j)$, and set the level of distribution shift to $s=0.7$.
Then, we monitor the two metrics in \cref{eq:monitored_metrics} across different bottleneck dimensions.
Specifically, although the overall architectures of $i$ and $j$ are different (Appendix \ref{app:exp_details} for further details), they share the same latent space dimensionality, $d=d_i=d_j$. Consequently, in this case, we can include among the baselines those methods suitable for heterogeneous architectures sharing the same latent space dimensionality. Specifically, we consider the \textit{FedProto} \citep{tan2022fedproto} and \textit{FedMuscle} \citep{setayesh2026toward} methods, although their communication protocols are centralized and therefore not directly applicable to a decentralized setting.

\begin{figure}[t]
\centering
\includegraphics[width=\linewidth]{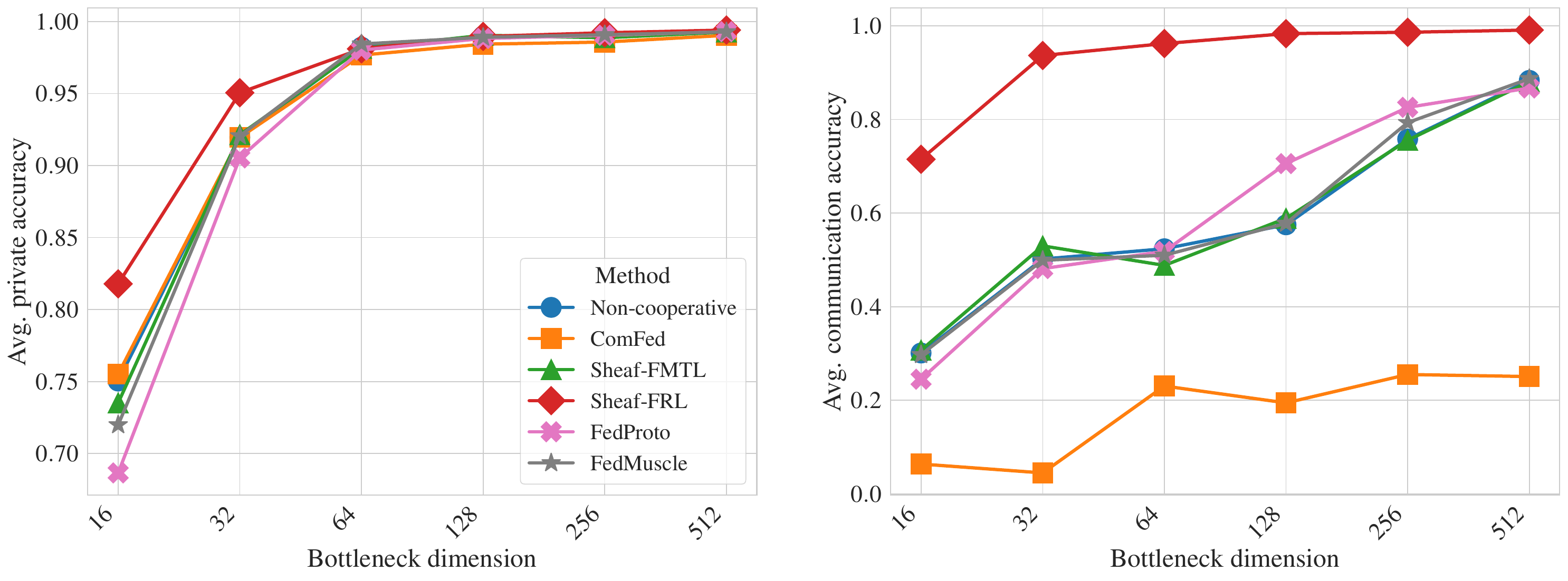}
\caption{
Average private accuracy (left) and average communication accuracy (right) against the bottleneck dimension at distribution-shift strength $s=0.7$, for the robustness-to-compression case study. 
Markers denote the simple average across the two agents.
}
\label{fig:hetero_bottleneck}
\end{figure}

\spara{Results.}
\Cref{fig:hetero_bottleneck} shows the results for all the considered methods.
Overall, Sheaf-FRL achieves a better trade-off between average classification accuracy (private and communication) and bottleneck dimensionality than the baselines, with the performance gap increasing as the bottleneck dimension decreases, i.e., in the high-compression regime. As expected, all methods suffer as the compression increases; however, the performance of Sheaf-FRL degrades much more gracefully.
Notably, it preserves high accuracy even at a bottleneck dimension of $16$, where we approach the minimum number of dimensions required by the network to discriminate the $10$ MNIST classes.
With respect to the baselines, their performance is comparable to that of the non-cooperative method, occasionally showing even lower communication accuracy despite employing the same post-training alignment strategy. The results confirm the poor transferability of the representations learned under these collaboration schemes: 
an agent's latent space is not readily reusable by another, regardless of whether a post-training alignment pipeline is applied. Moreover, in terms of average communication accuracy, ComFed achieves the lowest performance among the considered methods despite its learnable alignment maps, consistent with the results in \Cref{fig:multiagent_shift}.
\section{Conclusions and Future Work}
\label{sec:conclusions}

We introduced \emph{Sheaf-based Federated Representation Learning}, a framework that enables semantic alignment among heterogeneous agents without imposing a shared global latent space. 
By modeling agent-specific representations as sections of a learnable network sheaf and enforcing consistency through a quadratic gluing penalty induced by the sheaf Laplacian, SFRL accommodates differences in data distributions, sensing modalities, architectures, and latent dimensionalities, while remaining compatible with supervised, semi-supervised, and self-supervised learning paradigms. 
We developed Sheaf-FRL, a fully decentralized alternating algorithm that couples local gradient updates with closed-form Procrustes updates of the edge-wise restriction maps, and established its convergence to first-order stationary points in both deterministic and stochastic settings. 
Applied to a cooperative classification task in semantic communication under model and data heterogeneity, Sheaf-FRL outperformed existing baselines in terms of local and post-communication classification accuracy across different levels of distribution shift.
Additionally, it proved to be more robust to latent-space compression. 
Notably, this is achieved while evaluating the gluing penalty on only a small set of shared pilots, keeping communication overhead low.
These results support the sheaf-theoretic perspective as a principled and flexible foundation for representation alignment in heterogeneous federated systems. 
Future work includes extending the application of SFRL to multi-modal, semi-supervised, and self-supervised settings, as well as improving the efficiency of Sheaf-FRL from both an algorithmic and a communication-strategy perspective---for instance by replacing full pilot representations with prototype pilots---to further reduce communication cost.

\subsubsection*{Acknowledgments}
The work was supported by the SNS JU project 6G-GOALS~\citep{strinati2024goal} under the EU’s Horizon program Grant Agreement No 101139232, and by Huawei Technology France SASU under Grant N. Tg20250616041.

\bibliography{references}
\bibliographystyle{iclr2026_conference}

\appendix
\section{Additional discussion of our framework}\label{app:add_discussion}
This section elaborates on two salient features of our proposed framework, namely the edge stalk dimensionality and the enforced geometric consistency.
We also briefly discuss the homogeneous setting case where all node stalks have the same dimensionality.

\spara{Embedding vs. compression.}
Choosing the edge stalk dimensionality $d_{ij} = \max(d_i,d_j)$ avoids introducing an information bottleneck when aligning heterogeneous latent representations.
In contrast, as done for instance in \citep{badi2026}, enforcing $d_{ij} \le \min(d_i,d_j)$ would require projecting both representations onto a shared lower-dimensional subspace, implicitly assuming the existence of a common latent structure and potentially discarding node-specific information.
Moreover, such a projection only enforces alignment on the retained subspace, leaving the remaining components unconstrained. As a result, the learned restriction maps capture consistency only up to an information loss, potentially leading to inconsistencies when transferring latent representations across models. Our choice instead pursues a geometric notion of consistency across representations, and preserves the semantics of local representations. Importantly, this also makes our framework well-suited for downstream transfer of latent representations among pre-trained models, where alignment based on compression may result in performance degradation on downstream local tasks.

\spara{On the geometric consistency.}
To better understand the action of the restriction maps as well as the role of the total variation penalty, let us consider $\ns(i)\rightarrow \ns(e_{ij}) \leftarrow\ns(j)$ in \Cref{fig:sfrl}.
When transported to the edge stalk $\ns(e_{ij}) \subseteq \reall^3$, the latent representation $\z_i \in \ns(i)$ is rotated and reflected by the orthogonal map $\myO_{ji}$.
Instead, the latent representation $\z_j$ is embedded via the Stiefel map $\V_{ij}$ into the higher-dimensional edge stalk.
Interestingly, the image \im{\V_{ij}} of $\V_{ij}$ defines a two-dimensional subspace embedded in $\ns(e_{ij})$.
Thus, a perfect alignment between the latent representations $\z_i$ and $\z_j$ implies that $\z_i$ is a rotated and reflected version of a vector $\y_{j\to e_{ij}}=\V_{ij}\z_j$ belonging to \im{\V_{ij}}.
When this occurs for all the latent representations in $\ns(i)$, it means that the latter representations lie on a linear two-dimensional subspace of $\ns(i)$, as depicted in \Cref{fig:sfrl}.

In \Cref{fig:sfrl}, we notice that there is only a partial alignment between $\z_i$ and $\z_j$ when they are transported onto $\ns(e_{ij})$.
Indeed, even though the projection $\mathrm{Proj}(\myO_{ji}(\z_i))$ of $\myO_{ji}(\z_i)$ onto \im{\V_{ij}} exactly matches $\V_{ij}\z_j$, there is still an orthogonal component that makes non-null the Euclidean distance between the two transported vectors.
The latter component can be better visualized in $\ns(i)$, where we see that $\z_i$ does not lie on the subspace individuated by the transport of $\y_{j\to e_{ij}} \in \im{\V_{ij}}$.
Thus, the total variation term acts as a regularizer that, for all $e_{ij} \in \edgeset$, enforces the zeroing of components orthogonal to \im{\V_{ij}}.  

Notably, the latter discussion highlights that within our proposed framework we do not assume or leverage the existence a \emph{single} global latent representation, as instead pursued in recent works \citep{setayesh2026toward,badi2026}.
Indeed, in our framework a global representation may eventually emerge from the alignment of local representations over the network sheaf.

\paragraph{Homogeneous setting.}
When $\ns(i)=\ns(e)\cong\mathbb{R}^d$ for all $i \in \vertexset$ and $e \in \edgeset$, the local total variation term in \cref{eq:local_gluing} reduces to
\begin{equation}\label{eq:local_gluing_ort}
    \norm{\z_i - \myO_{ij} \z_j}_2^2\,,
\end{equation}
where $\myO_{ij} \in \ort{d}$. 
In this case, alignment between neighboring representations is achieved up to an orthogonal transformation, i.e., rotations and reflections of the latent space. 
This setting connects \Cref{eq:total_variation} to the Dirichlet energy of the connection Laplacian \citep{chung2014ranking}, and more generally to synchronization problems over the orthogonal group \citep{SingerWu2011VDM}. 
In particular, the transport maps satisfy $\myO_{ji} = \myO_{ij}^{\top}$, and the two variation terms in \cref{eq:tv-two-forms} coincide. 
In contrast, in the heterogeneous setting we allow semi-orthogonal maps on the Stiefel manifold, which act as isometric embeddings and are only left-invertible. 
Finally, since all node stalks share the same dimension, the orientation of \graph does not affect the model and can be chosen arbitrarily.
\section{Pilot Selection Strategies}
\label{app:pilots}

The alignment of latent spaces relies on a set of \emph{semantic pilots}: 
inputs whose correspondence across agents is known a priori, so that $f_{\bm\varphi_i}(\x_i^{k})$ and $f_{\bm\varphi_j}(\x_j^{k})$ can be treated as two latent representations of the same underlying sample. In practice, pilots arise from public datasets, calibration signals, standardized inputs, or an overlapping portion of the local datasets, and we assume that a common index set $\{1,\ldots,M\}$ has been agreed upon.

The choice of $\mathcal{A}\subset\{1,\ldots,M\}$, with $|\mathcal{A}|=K\ll M$, governs the trade-off between communication cost, computational efficiency, geometric coverage, and alignment quality. In the sequel, we outline the most widely used practical selection criteria.
Additionally, a normalization step, applied to the resulting pilot matrices for balancing directions in their corresponding latent spaces, is discussed at the end of the section.

\spara{Original pilots.}
Each agent transmits the embeddings $f_{\bm\varphi_i}(\x_i^{k})$ of the shared inputs themselves, where $k \in \mathcal A$. Taking $\mathcal{A}=\{1,\ldots,M\}$ evaluates the penalty exactly and is the most reliable option, but scales linearly in $M$ in both communication and Procrustes cost; subsampling to a budget $K\ll M$ instead yields an unbiased Monte Carlo estimator of the full alignment energy.

\epara{Unsupervised selection:} 
Pilots are drawn uniformly, which requires no side information but may under-represent low-density regions of the representation space and lead to ill-conditioned $\A_i$. Clustering the representation space and drawing a fixed number of pilots per cluster recovers the same stratification that labels provide in the supervised case, at the cost of one clustering pass, and guarantees that sparsely populated regions are represented.

\epara{Supervised selection:} 
Labels are exploited since they induce a natural stratification, and a fixed number of pilots is drawn per class, guaranteeing balanced coverage even for small $K$.

\spara{Prototype pilots.}
The shared set is partitioned into $K$ groups and each pilot is defined as the group average (centroid),
\[
\bar{\mathbf{f}}_{i,c}
= \frac{1}{|\mathcal{S}_c|}\sum_{k\in\mathcal{S}_c}
f_{\bm\varphi_i}(\x_i^{k}),
\qquad c=1,\ldots,K \,;
\]
which preserves semantic structure while averaging out per-sample noise.
When groups are large, the centroid need not be computed over all their members: averaging over a subset $\mathcal{S}_c' \subset \mathcal{S}_c$ gives an unbiased estimate at a fraction of the local forward passes, trading estimation variance for computational cost.

\epara{Unsupervised partitioning:} 
The partition is obtained by clustering the representation space (e.g., via $k$-means), with the resulting index assignment broadcast once and reused thereafter \citep{huttebraucker2024relative, fiorellino2026frame}.   

\epara{Supervised partitioning:} 
Class labels provide the partition directly, recovering prototype-based federated schemes such as FedProto \citep{tan2022fedproto}.

\spara{Ensuring correspondence and adaptive selection.}
To ensure that the transmitted quantities are in correspondence, both original and prototype pilots require the selection to be replicated across agents.
Specifically: \emph{(i)} the sampled indices in the first case, and \emph{(ii)} the partition and the subsets used to estimate the centroids in the second. Furthermore, the selection can be made adaptive by prioritizing the samples that currently exhibit the largest cross-client mismatch, biasing which inputs are transmitted or which groups are refined so as to concentrate the alignment effort on geometrically inconsistent regions. 

\spara{Dynamic selection.}
Any of the above criteria can be applied per iteration, letting $\mathcal{A}_t$ vary over time and turning the penalty into a stochastic approximation of the full alignment energy. A practical recipe combines a small persistent subset (for stability of the transport maps) with a periodically refreshed remainder (for coverage), which empirically improves generalization of the learned maps beyond the sampled pilots \citep{fiorellino2024dynamic}.

\spara{Parseval normalization.}
Independently of how $\mathcal{A}$ is selected, the conditioning of the Procrustes step depends on the geometry of the pilot matrix $\A_i = \big[f_{\bm\varphi_i}(\x_i^{k})\big]_{k\in\mathcal{A}} \in\mathbb{R}^{d_i\times K}$, with $K<d_i$ in the regime of interest.
To mitigate degeneracy, we can normalize pilots via
\[
\widetilde{\A}_i = \A_i (\A_i^\top \A_i)^{-1/2}\,,
\]
the orthogonal polar factor of $\A_i$:
this enforces $\widetilde{\A}_i^\top \widetilde{\A}_i = \mathbf{I}_K$, i.e.,\ a Parseval frame of the pilot subspace \citep{fiorellino2026frame}. 
This transformation balances directions in latent space and stabilizes the Procrustes updates.
\section{Identifiability of the Transport Maps}
\label{app:identifiability}

The edge-wise transports are computed via (semi-)orthogonal Procrustes problems
\begin{equation}
\label{eq:procrustes_generic}
\mathbf{V}_{ij} \in \arg\min_{\mathbf{V}\in\mathcal{C}_{ij}}
\left\| \A_i - \mathbf{V}\A_j \right\|_F^2,
\end{equation}
where $\mathcal{C}_{ij}=\ort{d}$ in the homogeneous case and $\mathcal{C}_{ij}=\mathrm{St}(d_i,d_j)$ for $d_i> d_j$.
We summarize conditions under which the minimizer is unique.

\spara{Homogeneous case ($d_i=d_j=d$).}
Let $\mathbf{M}_{ij} \coloneqq \A_i \A_j^\top \in \mathbb{R}^{d\times d}$.
An optimal solution is $\mathbf{V}_{ij}^\star = \mathbf{U}\mathbf{W}^\top$, where $\mathbf{M}_{ij}=\mathbf{U}\mathbf{\Sigma}\mathbf{W}^\top$ is an SVD \citep{Schonemann1966Procrustes}. 
The solution is unique if $\mathbf{M}_{ij}$ is full rank and has simple singular values. 
Non-uniqueness arises only when $\mathbf{M}_{ij}$ is rank-deficient or has repeated singular values, which induce rotational ambiguity within singular subspaces.
In particular, if $\mathrm{rank}(\A_i)=\mathrm{rank}(\A_j)=d$ (e.g., $K\ge d$ with non-degenerate pilots), then $\mathbf{M}_{ij}$ is generically full rank and $\mathbf{V}_{ij}^\star$ is uniquely defined up to measure-zero degeneracies.

\spara{Stiefel case ($d_i> d_j$).}
Let $\mathbf{M}_{ij}=\A_i\A_j^\top\in\mathbb{R}^{d_i\times d_j}$ and $\mathbf{M}_{ij}=\mathbf{U}\mathbf{\Sigma}\mathbf{W}^\top$ a thin SVD, with $\mathbf{U}\in\mathbb{R}^{d_i\times d_j}$, $\mathbf{W}\in\mathbb{R}^{d_j\times d_j}$.
A minimizer is
\begin{equation}
\label{eq:stiefel_solution}
\mathbf{V}_{ij}^\star = \mathbf{U}\mathbf{W}^\top \in \mathrm{St}(d_i,d_j)\,.
\end{equation}
Uniqueness holds if $\mathrm{rank}(\mathbf{M}_{ij})=d_j$ and its singular values are simple. 
Crucially, $\mathbf{V}_{ij}$ is identifiable only on the subspace spanned by $\A_j$: if $\mathrm{rank}(\A_j)=r<d_j$, the solution is underdetermined on the orthogonal complement. 
Hence, identifiability requires $\mathrm{rank}(\A_j)=d_j$ (typically ensured when $K\ge d_j$ and pilots provide sufficient geometric coverage), in which case $\mathbf{V}_{ij}^\star$ is generically unique up to measure-zero degeneracies.
\section{Pseudocode of the Sheaf-FRL Algorithm}
\label{app:algorithms}

In this section, we provide additional details on the proposed decentralized algorithm, reported in Alg.~\ref{alg:sheaf_frl_1comm}. The method follows an alternating minimization strategy, where each communication round consists of local feature extraction, decentralized alignment updates, and gradient-based optimization of the neural parameters.
\begin{algorithm}[t]
\caption{Sheaf-FRL}
\label{alg:sheaf_frl_1comm}
\begin{algorithmic}[1]
\REQUIRE Oriented $\graph=(\vertexset,\edgeset)$, pilots $\mathcal A$, $\lambda>0$, $\eta_\theta>0$, iterations $T$
\STATE Initialize $\{\boldsymbol{\theta}_i^0=(\boldsymbol{\varphi}_i^0,\boldsymbol{\psi}_i^0)\}_{i\in\vertexset}$

\FOR{$t=0,\dots,T-1$}

\FORALL{$i\in\vertexset$ \textbf{in parallel}}
  \STATE $\mathbf{A}_i^t \gets [f_{\boldsymbol{\varphi}_i^t}(\x_i^{k})]_{k\in\mathcal A}$
\ENDFOR

\STATE Each node $i$ broadcasts $\mathbf{A}_i^t$ to those $j\in\mathcal N(i)$

\FORALL{$i\in\vertexset$ \textbf{in parallel}}
  \FOR{$j \in \mathcal N(i)$}
    \IF{$j\in\mathcal N(i)^-$}
      \STATE $[\mathbf{U},\mathbf{\Sigma},\mathbf{W}^\top]\gets\mathrm{thinSVD}(\mathbf{A}_i^t\mathbf{A}_j^{t^\top})$
      \STATE $\mathbf{V}_{ij}^{t}\gets \mathbf{U}\mathbf{W}^\top$
    \ELSE
      \STATE $[\mathbf{U},\mathbf{\Sigma},\mathbf{W}^\top]\gets\mathrm{thinSVD}(\mathbf{A}_j^t\mathbf{A}_i^{t^\top})$
      \STATE $\mathbf{V}_{ji}^{t}\gets \mathbf{U}\mathbf{W}^\top$
    \ENDIF
  \ENDFOR
\ENDFOR

\FORALL{$i\in\vertexset$ \textbf{in parallel}}
  \STATE $\boldsymbol{\theta}_i^{t+1}
  \gets
  \boldsymbol{\theta}_i^t
  -\eta_\theta\left(
  \nabla_{\boldsymbol{\theta}_i}\mathcal L_i(\boldsymbol{\theta}_i^t)
  +
  \mathbf{r}^t_i
  \right)$
\ENDFOR

\ENDFOR
\end{algorithmic}
\end{algorithm}
\paragraph{Heterogeneous case.} Alg.~\ref{alg:sheaf_frl_1comm} implements a fully decentralized procedure in which agents iteratively update both their local representations and the alignment maps across the network. 
At each iteration $t$, every agent $i$ first computes its pilot feature matrix $\mathbf{A}_i^t$ using the current encoder (lines 3–5). These pilot representations provide a compact summary of the local latent space and constitute the only information exchanged across agents. The pilot features are then broadcast to neighboring nodes (line 6). Upon receiving $\{\mathbf{A}_j^t\}_{j \in \mathcal{N}(i)}$, each agent locally updates the alignment maps associated with its incident edges (lines 7–17). In particular, for each neighbor $j$, agent $i$ solves a small Procrustes problem based on the cross-covariance of the corresponding pilot features. The orientation of the graph determines whether the update corresponds to an incoming map $\mathbf{V}_{ij}^t$ or an outgoing map $\mathbf{V}_{ji}^t$, ensuring consistency with the underlying sheaf structure. Finally, each agent performs a gradient-based update of its local parameters $\boldsymbol{\theta}_i$ (lines 18–20), combining the gradient of the local loss with the contribution of the sheaf-based regularization. This step enforces alignment of neighboring representations while preserving local task objectives. Overall, the algorithm alternates between closed-form alignment updates and gradient-based parameter updates. All computations are carried out locally, and communication is limited to low-dimensional pilot features, resulting in an efficient and scalable decentralized learning procedure.

\begin{algorithm}[t]
\caption{Sheaf-FRL (homogeneous case)}
\label{alg:sheaf_frl_hom}
\begin{algorithmic}[1]
\REQUIRE Oriented $\graph=(\vertexset,\edgeset)$, pilots $\mathcal A$, $\lambda>0$, $\eta_\theta>0$, iterations $T$
\STATE Initialize $\{\boldsymbol{\theta}_i^0=(\boldsymbol{\varphi}_i^0,\boldsymbol{\psi}_i^0)\}_{i\in\vertexset}$ and $\{\myO_{ji}^0  =\eye{d}\}_{j \in \vertexset(i), i \in \vertexset}$ 

\FOR{$t=0,\dots,T-1$}

\FORALL{$i\in\vertexset$ \textbf{in parallel}}
  \STATE $\mathbf{A}_i^t \gets [f_{\boldsymbol{\varphi}_i^t}(\mathbf{x}_i^{k})]_{k\in\mathcal A}$
\ENDFOR

\STATE Each node $i$ exchanges $\myO_{ji}^{t}\mathbf{A}_i^t$ with all $j\in\mathcal N(i)$

\FORALL{$i\in\vertexset$ \textbf{in parallel}}
  \FORALL{$j\in\vertexset(i)^+$}
    \STATE $[\mathbf{U},\mathbf{\Sigma},\mathbf{W}^\top]
    \gets
    \mathrm{SVD}\big(\myO_{ij}^{t}\mathbf{A}_j^t\mathbf{A}_i^{t^\top}\big)$
    \STATE $\myO_{ji}^{t}\gets \mathbf{U}\mathbf{W}^\top$
  \ENDFOR
\ENDFOR

\FORALL{$i\in\vertexset$ \textbf{in parallel}}
  \STATE $\boldsymbol{\theta}_i^{t+1}
  \gets
  \boldsymbol{\theta}_i^{t}
  -\eta_\theta\left(
  \nabla_{\boldsymbol{\theta}_i}\mathcal L_i(\boldsymbol{\theta}_i^{t})
  +
  \mathbf{r}_i^{t}
  \right)$
\ENDFOR

\ENDFOR

\end{algorithmic}
\end{algorithm}

\spara{Homogeneous case.} 
When the latent dimensions are homogeneous, i.e., $d_i = d$ for all $i \in \vertexset$, the alignment maps reduce to orthogonal transformations $\myO_{ij} \in \ort{d}$, for each $(i,j) \in \edgeset$. 
In this setting, as discussed in \cref{rem:comp-cost-rm}, the local computational cost of the alignment step can be reduced without increasing communication. 
Alg.~\ref{alg:sheaf_frl_hom} presents a simplified Sheaf-FRL protocol tailored to this scenario. 
For each agent $i$, we initialize to the identity the maps used to transport its pilot representations toward its neighbors, i.e., $\myO_{ji}^0 = \mathbf{I}_d$ for all $j \in \mathcal{N}(i)$.
At each iteration $t$, each agent first computes its pilot representations $\mathbf{A}_i^t$ and sends to each neighbor $j \in \mathcal{N}(i)$ the transported features $\myO_{ji}^t \mathbf{A}_i^t$. 
Notably, since we exploit the reparameterization in \cref{eq:local_gluing_ort}, when $j \in \mathcal{N}^-(i)$, we have $\myO_{ji}^t = \myO_{ji}^0=\mathbf{I}_d$ for all $t=0, \ldots, T-1$, so that the transmitted features coincide with the original ones and no additional communication cost is incurred. 
From a local computation perspective, since transported pilot representations are communicated instead of raw ones, agent $i$ does not need to solve the orthogonal Procrustes problem arising in \eqref{prob:alignment_V_ij} for evaluating the alignment and the gradient required for the local parameter update.
As a consequence, only the SVDs for $j \in \vertexset(i)^+$ are required, effectively halving the computational cost of the alignment step, as specified in lines 8-10 of Alg.~\ref{alg:sheaf_frl_hom}.
Overall, this variant preserves the communication efficiency of the general algorithm while reducing the computational burden (cf.\nb\cref{tab:tradeoff}).
\section{Convergence Analysis}\label{sec:Convergence_analysis}

In this section we establish convergence guarantees for the proposed Sheaf-FRL algorithm in both deterministic and stochastic settings. We first recall the classical descent result for smooth functions.

\begin{lemma}[Descent Lemma, {\citealp[Prop.~1.2.3]{Nesterov2004},\citealp[Sec.~2.1]{Beck2017}}]
\label{lem:descent_V}
Let $f$ be differentiable with $L$-Lipschitz gradient.
Then for any $\mathbf{x}$ and any $\eta>0$,
\[
f(\mathbf{x}-\eta\nabla f(\mathbf{x}))
\le
f(\mathbf{x})
-
\eta\Big(1-\frac{L\eta}{2}\Big)
\|\nabla f(\mathbf{x})\|^2.
\]
\end{lemma}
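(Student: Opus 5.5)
The plan is to derive the inequality from the standard quadratic upper bound satisfied by functions with $L$-Lipschitz gradient, and then specialize it to the gradient step $\mathbf{y}=\mathbf{x}-\eta\nabla f(\mathbf{x})$.

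\textbf{Step 1: quadratic upper bound.} We first show that for all $\mathbf{x},\mathbf{y}$,
\[
f(\mathbf{y})\le f(\mathbf{x})+\langle\nabla f(\mathbf{x}),\mathbf{y}-\mathbf{x}\rangle+\frac{L}{2}\|\mathbf{y}-\mathbf{x}\|^2 .
\]
By the fundamental theorem of calculus along the segment $\mathbf{x}+s(\mathbf{y}-\mathbf{x})$, $s\in[0,1]$, we have
\[
f(\mathbf{y})-f(\mathbf{x})-\langle\nabla f(\mathbf{x}),\mathbf{y}-\mathbf{x}\rangle=\int_0^1\langle\nabla f(\mathbf{x}+s(\mathbf{y}-\mathbf{x}))-\nabla f(\mathbf{x}),\mathbf{y}-\mathbf{x}\rangle\,ds .
\]
Apply Cauchy--Schwarz to the integrand, then the Lipschitz bound $\|\nabla f(\mathbf{x}+s(\mathbf{y}-\mathbf{x}))-\nabla f(\mathbf{x})\|\le Ls\|\mathbf{y}-\mathbf{x}\|$. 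Integrating $Ls$ over $[0,1]$ yields the factor $L/2$.

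One technical point needs checking. Since $f$ is only assumed differentiable with $L$-Lipschitz gradient, the map $s\mapsto f(\mathbf{x}+s(\mathbf{y}-\mathbf{x}))$ is continuously differentiable. This justifies the use of the fundamental theorem of calculus. Alternatively, one can argue with the mean value theorem applied to $g(s)=f(\mathbf{x}+s(\mathbf{y}-\mathbf{x}))-s\langle\nabla f(\mathbf{x}),\mathbf{y}-\mathbf{x}\rangle$.

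\textbf{Step 2: specialize to the gradient step.} Substitute $\mathbf{y}-\mathbf{x}=-\eta\nabla f(\mathbf{x})$. The linear term becomes $-\eta\|\nabla f(\mathbf{x})\|^2$ and the quadratic term becomes $\frac{L\eta^2}{2}\|\nabla f(\mathbf{x})\|^2$. Collecting the two gives exactly $-\eta\big(1-\frac{L\eta}{2}\big)\|\nabla f(\mathbf{x})\|^2$, which is the claimed bound. It holds for any $\eta>0$, and it is a genuine descent whenever $\eta<2/L$.

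\textbf{Main obstacle.} There is no real obstacle. The only step needing care is Step 1, namely the integral representation and the bound on the gradient increment, which is a textbook argument (cf.\ \citealp{Nesterov2004,Beck2017}). This lemma will later be applied blockwise to the $\bm\theta_i$ updates with the $\V$ maps held fixed. That application requires a Lipschitz constant for $\nabla_{\bm\theta}(\mathcal L_i+\mathcal R_{\mathcal A})$, but that is an assumption made downstream, not part of this statement.
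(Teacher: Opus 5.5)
Your proof is correct and is the standard argument the paper relies on: the paper gives no proof of its own and only cites Nesterov and Beck, where the bound comes from the same integral representation of $f(\mathbf{y})-f(\mathbf{x})-\langle\nabla f(\mathbf{x}),\mathbf{y}-\mathbf{x}\rangle$, followed by the substitution $\mathbf{y}=\mathbf{x}-\eta\nabla f(\mathbf{x})$. One small caveat concerns your mean value theorem aside: a single application to your $g$ only yields the constant $L$, because the intermediate point $\xi\in(0,1)$ is unknown, so to get $L/2$ that way you would instead show that $s\mapsto g(s)-\frac{L}{2}s^2\|\mathbf{y}-\mathbf{x}\|^2$ is nonincreasing; your main argument via the fundamental theorem of calculus is unaffected.
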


\subsection{Deterministic Setting}

Let
\begin{equation}\label{eq:loss_convergence}
J(\boldsymbol{\theta},\mathbf{V})
=
\sum_{i\in \vertexset}
\mathcal{L}_i(\bm\theta_i)
+ \mathcal{R}_{\mathcal{A}}\!\left(
\{\bm\varphi_i\}_{i\in \vertexset},
\{\V_{ij}\}_{(i,j)\in \edgeset},
\{\V_{ji}\}_{(i,j)\in \edgeset}
\right)
\end{equation}
in \eqref{eq:global_problem} where, with slight abuse of notation,
$\boldsymbol{\theta}
=
\{\boldsymbol{\theta}_i\}_{i\in \vertexset}
=
\{(\bm\varphi_i,\bm\psi_i)\}_{i\in \vertexset}$,
and
$\mathbf{V}
=
\{\V_{ij}\}_{(i,j)\in\edgeset}
\cup
\{\V_{ji}\}_{(i,j)\in\edgeset}$.
For each edge $(i,j)\in \edgeset$, the matrices $\V_{ij} \in \stiefel{d_i}{d_j}$ and $\V_{ji} \in \stiefel{d_j}{d_i}$, where $\stiefel{d_i}{d_j}$ denotes a Stiefel manifold as defined in \Cref{eq:manifolds}.
The feasible set is therefore
$\mathcal V := \left(\prod_{(i,j)\in \edgeset} \stiefel{d_i}{d_j}\right) \times \left(\prod_{(i,j)\in \edgeset} \stiefel{d_j}{d_i}\right)$. The proposed method alternates between an exact minimization with respect to $\mathbf{V}$ and a gradient step with respect to $\boldsymbol{\theta}$, and can be compactly cast as:
\begin{align}
\mathbf{V}^{t}
&\in
\arg\min_{\mathbf{V}\in\mathcal V}
J(\boldsymbol{\theta}^{t},\mathbf{V}),\label{eq:stiefelstep_V}\\
\boldsymbol{\theta}^{t+1}
&=
\boldsymbol{\theta}^t
-
\eta
\nabla_{\boldsymbol{\theta}}
J(\boldsymbol{\theta}^t,\mathbf{V}^{t})\label{eq:gdstep_V}
\end{align}
We introduce the following assumptions.
\begin{assumption}
\label{ass:lower}
The objective $J(\boldsymbol{\theta},\mathbf{V})$ is bounded from below, with lower bound $$J_{\inf}\coloneqq\inf_{\boldsymbol{\theta},\mathbf{V}}J(\boldsymbol{\theta},\mathbf{V})\,.$$ 
\end{assumption}
\begin{assumption}
\label{ass:smooth}
For every fixed $\mathbf{V}\in\mathcal V$,
$J(\boldsymbol{\theta},\mathbf{V})$
is differentiable and its gradient is $L$-Lipschitz:
\[
\|
\nabla_{\boldsymbol{\theta}}
J(\boldsymbol{\theta}_1,\mathbf{V})
-
\nabla_{\boldsymbol{\theta}}
J(\boldsymbol{\theta}_2,\mathbf{V})
\|
\le
L
\|
\boldsymbol{\theta}_1-\boldsymbol{\theta}_2
\|.
\]
\end{assumption}
\begin{assumption}
\label{ass:exact}
At each iteration,
$$\mathbf{V}^{t}
\in
\arg\min_{\mathbf{V}\in\mathcal V}
J(\boldsymbol{\theta}^{t},\mathbf{V}),$$ 
i.e., $\mathbf{V}^{t}$ is an exact minimizer of $J$ given $\boldsymbol{\theta}^{t}$.
\end{assumption}

\begin{assumption}
\label{ass:compact}
The feasible set $\mathcal V$ is compact.
\end{assumption}

Under \ref{ass:lower}-\ref{ass:compact}, the following convergence result holds.
\begin{theorem}
\label{thm:main_Q}
Suppose Assumptions~\ref{ass:lower}--\ref{ass:compact}
hold and choose $0<\eta\le 1/L$.
Then:

\begin{enumerate}
\item \textbf{Descent, Convergence, and Stationarity.} The sequence $\{J(\boldsymbol{\theta}^t,\mathbf{V}^{t})\}$ is monotonically decreasing and therefore convergent. Moreover,
$\lim_{t\to\infty}\|\nabla_{\boldsymbol{\theta}} J(\boldsymbol{\theta}^t,\mathbf{V}^{t})\| = 0$,
and every accumulation point $(\boldsymbol{\theta}^\star,\mathbf{V}^\star)$ is a block-wise first-order stationary point.

\item \textbf{Sublinear stationarity rate.}
For all $T\ge 0$,
\begin{equation}
\label{eq:rate_main_Q}
\min_{0\le t\le T}
\|
\nabla_{\boldsymbol{\theta}}
J(\boldsymbol{\theta}^t,\mathbf{V}^{t})
\|^2
\le
\frac{
2\big(
J(\boldsymbol{\theta}^0,\mathbf{V}^0)
-
J_{\inf}
\big)
}{
\eta (T+1)
}\,.
\end{equation}

\end{enumerate}
\end{theorem}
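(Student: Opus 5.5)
The plan is the standard alternating-minimization descent argument, combining the descent lemma (\cref{lem:descent_V}) for the $\boldsymbol{\theta}$-step with the exact optimality of the $\mathbf{V}$-step.

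\textbf{Per-iteration decrease.} Fix $t$ and apply \cref{lem:descent_V} to $f(\cdot)=J(\cdot,\mathbf{V}^t)$. This is legitimate by Assumption~\ref{ass:smooth}, whose constant $L$ is uniform in $\mathbf{V}$. With the update \eqref{eq:gdstep_V} and $\eta\le 1/L$, so that $1-L\eta/2\ge 1/2$, we get
\[
J(\boldsymbol{\theta}^{t+1},\mathbf{V}^t)\le J(\boldsymbol{\theta}^t,\mathbf{V}^t)-\tfrac{\eta}{2}\|\nabla_{\boldsymbol{\theta}}J(\boldsymbol{\theta}^t,\mathbf{V}^t)\|^2 .
\]
Assumption~\ref{ass:exact} says $\mathbf{V}^{t+1}$ minimizes $J(\boldsymbol{\theta}^{t+1},\cdot)$ over $\mathcal V$. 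Since $\mathbf{V}^t\in\mathcal V$ is feasible, $J(\boldsymbol{\theta}^{t+1},\mathbf{V}^{t+1})\le J(\boldsymbol{\theta}^{t+1},\mathbf{V}^t)$. Chaining the two inequalities gives the key estimate
\[
J(\boldsymbol{\theta}^{t+1},\mathbf{V}^{t+1})\le J(\boldsymbol{\theta}^t,\mathbf{V}^t)-\tfrac{\eta}{2}\|\nabla_{\boldsymbol{\theta}}J(\boldsymbol{\theta}^t,\mathbf{V}^t)\|^2 .
\]
This yields monotone decrease. Together with Assumption~\ref{ass:lower}, it gives convergence of $J(\boldsymbol{\theta}^t,\mathbf{V}^t)$.

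\textbf{Rate and vanishing gradient.} Sum the key estimate over $t=0,\dots,T$ and telescope, using $J(\boldsymbol{\theta}^{T+1},\mathbf{V}^{T+1})\ge J_{\inf}$:
\[
\tfrac{\eta}{2}\sum_{t=0}^T\|\nabla_{\boldsymbol{\theta}}J(\boldsymbol{\theta}^t,\mathbf{V}^t)\|^2\le J(\boldsymbol{\theta}^0,\mathbf{V}^0)-J_{\inf}.
\]
Bounding the minimum by the average gives \eqref{eq:rate_main_Q}. Letting $T\to\infty$ shows the series is summable, so its terms tend to zero, i.e.\ $\|\nabla_{\boldsymbol{\theta}}J(\boldsymbol{\theta}^t,\mathbf{V}^t)\|\to 0$.

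\textbf{Accumulation points.} Let $(\boldsymbol{\theta}^{t_k},\mathbf{V}^{t_k})\to(\boldsymbol{\theta}^\star,\mathbf{V}^\star)$. Compactness of $\mathcal V$ (Assumption~\ref{ass:compact}) guarantees $\mathbf{V}^\star\in\mathcal V$ and gives room to extract subsequences. The two stationarity conditions are handled as follows.
\begin{itemize}
\item \emph{$\boldsymbol{\theta}$-block.} I would need $\nabla_{\boldsymbol{\theta}}J$ to be jointly continuous in $(\boldsymbol{\theta},\mathbf{V})$. This holds here because $\mathcal{R}_{\mathcal A}$ is polynomial in $\mathbf{V}$ and the pilot features. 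Joint continuity then gives $\nabla_{\boldsymbol{\theta}}J(\boldsymbol{\theta}^\star,\mathbf{V}^\star)=0$.
\item \emph{$\mathbf{V}$-block.} For any $\mathbf{V}\in\mathcal V$, $J(\boldsymbol{\theta}^{t_k},\mathbf{V}^{t_k})\le J(\boldsymbol{\theta}^{t_k},\mathbf{V})$. Passing to the limit by continuity of $J$ shows $\mathbf{V}^\star$ minimizes $J(\boldsymbol{\theta}^\star,\cdot)$ over $\mathcal V$. Hence $\mathbf{V}^\star$ is in particular Riemannian first-order stationary on the product of Stiefel manifolds: the Riemannian gradient, i.e.\ the projection of the Euclidean gradient onto the tangent space, vanishes.
\end{itemize}

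\textbf{Main obstacle.} The only delicate step is this accumulation-point argument. It needs joint continuity of $J$ and $\nabla_{\boldsymbol{\theta}}J$ in both blocks, which the assumptions state only in $\boldsymbol{\theta}$ for fixed $\mathbf{V}$. I would justify it from the explicit quadratic structure of \eqref{eq:local_gluing_penalty} in $\mathbf{V}$ and the continuity of the encoders, or add it as a mild standing regularity condition.
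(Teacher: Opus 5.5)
Your proposal is correct and follows the paper's proof essentially step for step. It combines the descent lemma on $J(\cdot,\mathbf{V}^t)$ with exact $\mathbf{V}$-minimization for monotone decrease, telescopes and bounds the minimum by the average for the rate, and passes to the limit along a convergent subsequence for block-wise stationarity. You also point out that this limit step needs joint continuity of $J$ and $\nabla_{\boldsymbol{\theta}}J$ in $(\boldsymbol{\theta},\mathbf{V})$, which the paper invokes only as ``continuity of $\nabla_{\boldsymbol{\theta}}J$''; this is a useful clarification, and it does follow from the uniform $L$-Lipschitz bound in $\boldsymbol{\theta}$ together with the polynomial dependence on $\mathbf{V}$.
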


\begin{proof}

\textbf{(i) Descent, Convergence, and Stationarity}.
Fix $\mathbf{V}^{t}$ and consider
$
J(\boldsymbol{\theta},\mathbf{V}^{t}).
$
By Assumption~\ref{ass:smooth}, $J(\boldsymbol{\theta},\mathbf{V}^{t})$ has $L$-Lipschitz gradient. Applying Lemma~\ref{lem:descent_V} with $\eta\le 1/L$ yields
\begin{equation}\label{eq:Descent_ineq}
J(\boldsymbol{\theta}^{t+1},\mathbf{V}^{t})
\le
J(\boldsymbol{\theta}^t,\mathbf{V}^{t})
-
\frac{\eta}{2}
\|
\nabla_{\boldsymbol{\theta}}
J(\boldsymbol{\theta}^t,\mathbf{V}^{t})
\|^2.    
\end{equation}
By exact minimization of $\mathbf{V}$ at iteration $t+1$
(Assumption~\ref{ass:exact}), 
$J(\boldsymbol{\theta}^{t+1},\mathbf{V}^{t+1})
\le
J(\boldsymbol{\theta}^{t+1},\mathbf{V}^{t})$.
Combining this inequality with the previous bound yields
$J(\boldsymbol{\theta}^{t+1},\mathbf{V}^{t+1})
\le
J(\boldsymbol{\theta}^{t},\mathbf{V}^{t})$,
which shows that the sequence $\{J(\boldsymbol{\theta}^t,\mathbf{V}^{t})\}$ is monotonically decreasing.
Since $J$ is bounded below (Assumption~\ref{ass:lower}), the sequence therefore converges.

Summing \eqref{eq:Descent_ineq} over $t$ yields
$\sum_{t=0}^{\infty}
\|
\nabla_{\boldsymbol{\theta}}
J(\boldsymbol{\theta}^t,\mathbf{V}^{t})
\|^2
< \infty,$ which implies 
$$\lim_{t\to\infty}
\|
\nabla_{\boldsymbol{\theta}}
J(\boldsymbol{\theta}^t,\mathbf{V}^{t})
\|
=
0.$$
Let $(\boldsymbol{\theta}^{t_k},\mathbf{V}^{t_k})$ be a convergent subsequence. By continuity of $\nabla_{\boldsymbol{\theta}}J$, it holds
$\nabla_{\boldsymbol{\theta}}
J(\boldsymbol{\theta}^\star,\mathbf{V}^\star)=0.$
Moreover, since $\mathbf{V}^{t}$ minimizes
$J(\boldsymbol{\theta}^{t},\cdot)$ over the compact set $\mathcal V$,
passing to the limit gives
\[
\mathbf{V}^\star
\in
\arg\min_{\mathbf{V}\in\mathcal V}
J(\boldsymbol{\theta}^\star,\mathbf{V}).
\]
Thus every accumulation point is block-wise first-order stationary.

\textbf{(ii) Sublinear Stationarity Rate.} From the descent inequality (\ref{eq:Descent_ineq}), summing from $t=0$ to $T$ and telescoping yields
\[
\frac{\eta}{2}
\sum_{t=0}^{T}
\|
\nabla_{\boldsymbol{\theta}}
J(\boldsymbol{\theta}^t,\mathbf{V}^{t})
\|^2
\le
J(\boldsymbol{\theta}^0,\mathbf{V}^0)
-
J(\boldsymbol{\theta}^{T+1},\mathbf{V}^{T}).
\]
Since
$J(\boldsymbol{\theta}^{T+1},\mathbf{V}^{T})\ge J_{\inf}$ (Assumption~\ref{ass:lower}),
\[
\sum_{t=0}^{T}
\|
\nabla_{\boldsymbol{\theta}}
J(\boldsymbol{\theta}^t,\mathbf{V}^{t})
\|^2
\le
\frac{
2\big(
J(\boldsymbol{\theta}^0,\mathbf{V}^0)
-
J_{\inf}
\big)
}{\eta}.
\]
Finally, using
$\min_{0\le t\le T} a_t
\le
\frac{1}{T+1}
\sum_{t=0}^{T} a_t$
with
$
a_t =
\|
\nabla_{\boldsymbol{\theta}}
J(\boldsymbol{\theta}^t,\mathbf{V}^{t})
\|^2
$,
we obtain
\[
\min_{0\le t\le T}
\|
\nabla_{\boldsymbol{\theta}}
J(\boldsymbol{\theta}^t,\mathbf{V}^{t})
\|^2
\le
\frac{
2\big(
J(\boldsymbol{\theta}^0,\mathbf{V}^0)
-
J_{\inf}
\big)
}{
\eta (T+1)
},
\]
which proves~\eqref{eq:rate_main_Q}.
\end{proof}

\subsection{Stochastic Setting}

We consider the same deterministic objective $J(\boldsymbol{\theta},\mathbf{V})$ in (\ref{eq:loss_convergence}), and extend the $\boldsymbol{\theta}$-update to the stochastic (mini-batch) setting while keeping the $\mathbf{V}$-block \emph{exactly} minimized with respect to the full objective. Let $\{\Phi_t\}_{t\ge 0}$ be the natural filtration generated by the iterates and the sampling randomness up to time $t$. At iteration $t$, we first update the alignment variables by exact minimization,
\begin{equation}
\mathbf{V}^{t}
\in
\arg\min_{\mathbf{V}\in\mathcal V}
J(\boldsymbol{\theta}^{t},\mathbf{V}),
\label{eq:exactVstep_V}
\end{equation}
and then form a mini-batch stochastic gradient estimator
$\mathbf{g}^t \equiv \mathbf{g}(\boldsymbol{\theta}^t,\mathbf{V}^{t};\xi_t),$
where $\xi_t$ denotes the mini-batch sampling randomness. The stochastic update of the neural parameters is
\begin{equation}
\boldsymbol{\theta}^{t+1}
=
\boldsymbol{\theta}^t
-
\eta_t\,\mathbf{g}^t.
\label{eq:sgdstep_V}
\end{equation}

To analyze the convergence behavior of the stochastic scheme, we impose the following assumptions on the stochastic gradient estimator.

\begin{assumption}[Unbiased stochastic gradient]
\label{ass:unbiased}
For all $t\ge 0$,
\[
\mathbb E[\mathbf{g}^t\mid \Phi_t]
=
\nabla_{\boldsymbol{\theta}} J(\boldsymbol{\theta}^t,\mathbf{V}^{t}).
\]
\end{assumption}

\begin{assumption}[Bounded conditional variance]
\label{ass:var}
There exists $\sigma^2<\infty$ such that for all $t\ge 0$,
\[
\mathbb E\!\left[
\big\|
\mathbf{g}^t-\nabla_{\boldsymbol{\theta}}J(\boldsymbol{\theta}^t,\mathbf{V}^{t})
\big\|^2
\ \middle|\ \Phi_t
\right]
\le
\sigma^2.
\]
\end{assumption}

Under these assumptions, we establish convergence guarantees for the stochastic variant of the proposed method in the following sections.

\subsubsection{Convergence Guarantee with Optimized Step Size}\label{subsec:stochastic_theorem}

\begin{theorem}[Stochastic convergence rate]
\label{thm:stochastic_rate_V}
Assume \cref{ass:lower,ass:smooth,ass:exact,ass:compact}, and the stochastic gradient conditions in \cref{ass:unbiased,ass:var}.
Let $\{(\boldsymbol{\theta}^t,\mathbf{V}^t)\}_{t\ge 0}$ be generated by
\eqref{eq:exactVstep_V}--\eqref{eq:sgdstep_V}.
Define $\Delta_0 := J(\boldsymbol{\theta}^0,\mathbf{V}^0)-J_{\inf}$, where
$J_{\inf} := \inf_{\boldsymbol{\theta},\mathbf{V}} J(\boldsymbol{\theta},\mathbf{V})$.
Then, for any horizon $T\ge 0$, choosing the constant step size
\[
\eta_t \equiv \eta
:=
\min\!\left\{
\frac{1}{L},
\sqrt{\frac{2\Delta_0}{L\sigma^2 (T+1)}}
\right\}
\]
yields the bound
\[
\min_{0\le t\le T}
\mathbb E\!\left[
\left\|
\nabla_{\boldsymbol{\theta}}
J(\boldsymbol{\theta}^t,\mathbf{V}^{t})
\right\|^2
\right]
\le
2\sqrt{\frac{2L\sigma^2\,\Delta_0}{T+1}}
\;+\;
\frac{2L\Delta_0}{T+1}.
\]
In particular, the method attains the stochastic convergence rate
\[
\min_{0\le t\le T}
\mathbb E\!\left[
\left\|
\nabla_{\boldsymbol{\theta}}
J(\boldsymbol{\theta}^t,\mathbf{V}^{t})
\right\|^2
\right]
=
\mathcal O\!\left(\frac{1}{\sqrt{T}}\right).
\]
\end{theorem}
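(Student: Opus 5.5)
The plan follows the proof of \cref{thm:main_Q}, replacing the deterministic descent inequality with its expected version. Fix $t$ and condition on $\Phi_t$, so that $\boldsymbol{\theta}^t$ and $\mathbf{V}^t$ are determined. The map $\boldsymbol{\theta}\mapsto J(\boldsymbol{\theta},\mathbf{V}^t)$ has $L$-Lipschitz gradient by \cref{ass:smooth}. The standard quadratic upper bound, which is the same fact underlying Lemma~\ref{lem:descent_V}, applied to the step $-\eta\mathbf{g}^t$ gives
\[
J(\boldsymbol{\theta}^{t+1},\mathbf{V}^t)\le J(\boldsymbol{\theta}^t,\mathbf{V}^t)-\eta\langle\nabla_{\boldsymbol{\theta}}J(\boldsymbol{\theta}^t,\mathbf{V}^t),\mathbf{g}^t\rangle+\tfrac{L\eta^2}{2}\|\mathbf{g}^t\|^2 .
\]
Taking conditional expectations, \cref{ass:unbiased} turns the inner product into $\|\nabla_{\boldsymbol{\theta}}J\|^2$. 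Writing $\mathbb E\|\mathbf{g}^t\|^2=\|\nabla_{\boldsymbol{\theta}}J\|^2+\mathbb E\|\mathbf{g}^t-\nabla_{\boldsymbol{\theta}}J\|^2$ and using \cref{ass:var} then gives
\[
\mathbb E[J(\boldsymbol{\theta}^{t+1},\mathbf{V}^t)\mid\Phi_t]\le J(\boldsymbol{\theta}^t,\mathbf{V}^t)-\eta\big(1-\tfrac{L\eta}{2}\big)\|\nabla_{\boldsymbol{\theta}}J(\boldsymbol{\theta}^t,\mathbf{V}^t)\|^2+\tfrac{L\eta^2\sigma^2}{2}.
\]
With $\eta\le 1/L$ the factor $1-L\eta/2$ is at least $1/2$.

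Next comes the $\mathbf{V}$-block. Exact minimization at step $t+1$ (\cref{ass:exact}) holds pointwise, so it gives $J(\boldsymbol{\theta}^{t+1},\mathbf{V}^{t+1})\le J(\boldsymbol{\theta}^{t+1},\mathbf{V}^t)$ almost surely. Compactness of $\mathcal V$ (\cref{ass:compact}) and continuity of $J$ guarantee that the minimizer exists. I would then take total expectations, sum over $t=0,\dots,T$, and telescope. Using $J\ge J_{\inf}$ (\cref{ass:lower}), this yields
\[
\tfrac{\eta}{2}\sum_{t=0}^T\mathbb E\|\nabla_{\boldsymbol{\theta}}J(\boldsymbol{\theta}^t,\mathbf{V}^t)\|^2\le\Delta_0+\tfrac{(T+1)L\eta^2\sigma^2}{2}.
\]
Bounding the minimum by the average then gives
\[
\min_{0\le t\le T}\mathbb E\|\nabla_{\boldsymbol{\theta}}J\|^2\le\frac{2\Delta_0}{\eta(T+1)}+L\eta\sigma^2 .
\]

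The remaining step is the step-size case analysis, which is where care is needed to obtain the stated constants. Let $\eta^\ast=\sqrt{2\Delta_0/(L\sigma^2(T+1))}$, so that $\eta=\min\{1/L,\eta^\ast\}$.

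\emph{Case $\eta=\eta^\ast$.} Both terms equal $\sqrt{2L\sigma^2\Delta_0/(T+1)}$, so their sum is $2\sqrt{2L\sigma^2\Delta_0/(T+1)}$.

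\emph{Case $\eta=1/L\le\eta^\ast$.} The first term is $2L\Delta_0/(T+1)$. The second term is $\sigma^2=L\sigma^2\eta$, which is at most $L\sigma^2\eta^\ast=\sqrt{2L\sigma^2\Delta_0/(T+1)}$.

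In either case the bound is at most the claimed sum of the two terms. The $\mathcal O(1/\sqrt T)$ rate follows because the first term dominates for large $T$.

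The main obstacle is a measurability subtlety rather than the algebra. $\mathbf{V}^t$ must be $\Phi_t$-measurable so that \cref{ass:unbiased} applies with $\mathbf{V}^t$ frozen; this holds because $\mathbf{V}^t$ is computed from $\boldsymbol{\theta}^t$ before sampling $\xi_t$, using a measurable selection of the argmin such as the SVD formula \eqref{eq:procrustes_solution_V}. In addition, the $\mathbf{V}$-improvement inequality must hold pathwise so that it survives taking expectations. Both facts are immediate from the construction and need only be stated.
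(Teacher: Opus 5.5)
Your proof is correct and follows the paper's argument essentially step for step. Both use the expected descent inequality for the $\boldsymbol{\theta}$-step via unbiasedness and the variance bound, pathwise improvement from the exact $\mathbf{V}$-step, telescoping against $J_{\inf}$, and the same step-size case split. Your treatment of the case $\eta=1/L$ is slightly more explicit than the paper's: you verify $\sigma^2 = L\sigma^2\eta \le L\sigma^2\eta^\ast=\sqrt{2L\sigma^2\Delta_0/(T+1)}$, which is exactly what justifies the unified bound that the paper only asserts.
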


\begin{proof}
For brevity, define
\[
J_t := J(\boldsymbol{\theta}^t,\mathbf{V}^t),
\qquad
\nabla_t := \nabla_{\boldsymbol{\theta}}J(\boldsymbol{\theta}^t,\mathbf{V}^{t}),
\qquad
\mathbf{g}^t := \mathbf{g}(\boldsymbol{\theta}^t,\mathbf{V}^{t};\xi_t).
\]
Fix $t\ge 0$. By Assumption~\ref{ass:smooth}, for the $L$-smooth map
$\boldsymbol{\theta}\mapsto J(\boldsymbol{\theta},\mathbf{V}^{t})$, we have for any vector $\mathbf{u}$,
\[
J(\boldsymbol{\theta}^t+\mathbf{u},\mathbf{V}^{t})
\le
J(\boldsymbol{\theta}^t,\mathbf{V}^{t})
+
\nabla_t^\top \mathbf{u}
+
\frac{L}{2}\|\mathbf{u}\|^2.
\]
Applying this with $\mathbf{u}=-\eta\,\mathbf{g}^t$ yields
\begin{equation}
\label{eq:smooth_step_bound}
J(\boldsymbol{\theta}^{t+1},\mathbf{V}^{t})
\le
J(\boldsymbol{\theta}^t,\mathbf{V}^{t})
-
\eta\,\nabla_t^\top \mathbf{g}^t
+
\frac{L\eta^2}{2}\|\mathbf{g}^t\|^2.
\end{equation}
By exact minimization in $\mathbf{V}$ (Assumption~\ref{ass:exact}),
\begin{equation}
\label{eq:exact_V_step_bound}
J(\boldsymbol{\theta}^{t+1},\mathbf{V}^{t+1})
\le
J(\boldsymbol{\theta}^{t+1},\mathbf{V}^t).
\end{equation}
Combining \eqref{eq:smooth_step_bound}--\eqref{eq:exact_V_step_bound} gives
\begin{equation}
\label{eq:one_step_before_expectation}
J_{t+1}
\le
J_t
-
\eta\,\nabla_t^\top \mathbf{g}^t
+
\frac{L\eta^2}{2}\|\mathbf{g}^t\|^2.
\end{equation}

Taking conditional expectation w.r.t.\ $\Phi_t$ and using unbiasedness
(Assumption~\ref{ass:unbiased}) gives
\[
\mathbb E\!\left[\nabla_t^\top \mathbf{g}^t\mid \Phi_t\right]
=
\nabla_t^\top \mathbb E[\mathbf{g}^t\mid \Phi_t]
=
\|\nabla_t\|^2.
\]
Moreover, expanding the second moment and using the bounded conditional variance
(Assumption~\ref{ass:var}) yields
\begin{align*}
\mathbb E\!\left[\|\mathbf{g}^t\|^2\mid \Phi_t\right]
&=
\mathbb E\!\left[\|\mathbf{g}^t-\nabla_t+\nabla_t\|^2\mid \Phi_t\right]\\
&=
\|\nabla_t\|^2
+
\mathbb E\!\left[\|\mathbf{g}^t-\nabla_t\|^2\mid \Phi_t\right]
+
2\,\nabla_t^\top
\mathbb E[\mathbf{g}^t-\nabla_t\mid \Phi_t]\\
&\le
\|\nabla_t\|^2+\sigma^2,
\end{align*}
since $\mathbb E[\mathbf{g}^t-\nabla_t\mid \Phi_t]=\mathbf{0}$ by Assumption~\ref{ass:unbiased}.
Substituting these two identities into the conditional expectation of
\eqref{eq:one_step_before_expectation} yields
\begin{equation}
\label{eq:one_step_expected}
\mathbb E[J_{t+1}\mid \Phi_t]
\le
J_t
-
\eta\Big(1-\frac{L\eta}{2}\Big)\|\nabla_t\|^2
+
\frac{L\eta^2}{2}\sigma^2.
\end{equation}
Assuming $0<\eta\le 1/L$ implies $1-\frac{L\eta}{2}\ge \frac12$, hence
\begin{equation}
\label{eq:one_step_expected_simple}
\mathbb E[J_{t+1}\mid \Phi_t]
\le
J_t
-
\frac{\eta}{2}\|\nabla_t\|^2
+
\frac{L\eta^2}{2}\sigma^2.
\end{equation}

Taking total expectation in \eqref{eq:one_step_expected_simple} and summing from $t=0$ to $T$ gives
\[
\mathbb E[J_{T+1}]
\le
J_0
-
\frac{\eta}{2}\sum_{t=0}^{T}\mathbb E\|\nabla_t\|^2
+
\frac{L\eta^2}{2}\sigma^2(T+1).
\]
Rearranging and using the lower bound $\mathbb E[J_{T+1}]\ge J_{\inf}$ (Assumption~\ref{ass:lower}) yields
\begin{equation}
\label{eq:sum_grad_bound}
\frac{\eta}{2}\sum_{t=0}^{T}\mathbb E\|\nabla_t\|^2
\le
\Delta_0 + \frac{L\eta^2}{2}\sigma^2(T+1),
\qquad
\Delta_0:=J_0-J_{\inf}.
\end{equation}
Dividing by $\eta(T+1)$ and using
$\min_{0\le t\le T} a_t \le \frac{1}{T+1}\sum_{t=0}^T a_t$ gives
\begin{equation}
\label{eq:min_grad_bound_eta}
\min_{0\le t\le T}\mathbb E\|\nabla_t\|^2
\le
\frac{2\Delta_0}{\eta(T+1)} + L\eta\,\sigma^2.
\end{equation}

We now optimize the right-hand side in $\eta$ under the constraint $\eta\le 1/L$.
Let
$$\eta^\star := \sqrt{\frac{2\Delta_0}{L\sigma^2(T+1)}}.$$ If $\eta^\star\le 1/L$, choosing $\eta=\eta^\star$ in \eqref{eq:min_grad_bound_eta} yields
\[
\min_{0\le t\le T}\mathbb E\|\nabla_t\|^2
\le
2\sqrt{\frac{2L\sigma^2\Delta_0}{T+1}}.
\]
If instead $\eta^\star>1/L$, we choose $\eta=1/L$ in \eqref{eq:min_grad_bound_eta} to obtain
\[
\min_{0\le t\le T}\mathbb E\|\nabla_t\|^2
\le
\frac{2L\Delta_0}{T+1}+\sigma^2.
\]
Both cases are covered by the choice
\[
\eta=\min\left\{\frac{1}{L},\ \sqrt{\frac{2\Delta_0}{L\sigma^2(T+1)}}\right\}.
\]
Moreover, with this choice we have the unified bound
\[
\min_{0\le t\le T}\mathbb E\|\nabla_t\|^2
\le
2\sqrt{\frac{2L\sigma^2\Delta_0}{T+1}}
+
\frac{2L\Delta_0}{T+1},
\]
where the second term is redundant when $\eta=\eta^\star$ but remains valid.
This proves Theorem~\ref{thm:stochastic_rate_V}.
\end{proof}

\subsubsection{Almost-sure Convergence with Diminishing Step Sizes}\label{subsec:stochastic_as}

We establish an almost-sure convergence guarantee under a Robbins--Monro step-size schedule.
Throughout, let $\{\Phi_t\}_{t\ge 0}$ denote the natural filtration and let the iterates be generated by
\eqref{eq:exactVstep_V}--\eqref{eq:sgdstep_V}.

\begin{assumption}[Robbins--Monro step sizes]
\label{ass:rm_steps}
The step sizes satisfy $0<\eta_t\le 1/L$ for all $t\ge 0$, and
\[
\sum_{t=0}^\infty \eta_t = \infty,
\qquad
\sum_{t=0}^\infty \eta_t^2 < \infty.
\]
\end{assumption}

\begin{theorem}[Almost-sure convergence]
\label{thm:stochastic_as_V}
Assume \Cref{ass:lower,ass:smooth,ass:exact,ass:compact},
and \Cref{ass:unbiased,ass:var} and \cref{ass:rm_steps}.
Let $\{(\boldsymbol{\theta}^t,\mathbf{V}^t)\}_{t\ge 0}$ be generated by
\eqref{eq:exactVstep_V}--\eqref{eq:sgdstep_V}.
Then:
\begin{enumerate}
\item The sequence $\{J(\boldsymbol{\theta}^t,\mathbf{V}^t)\}$ converges almost surely to a finite random variable.

\item Moreover,
\[
\sum_{t=0}^\infty
\eta_t
\left\|
\nabla_{\boldsymbol{\theta}}
J(\boldsymbol{\theta}^t,\mathbf{V}^{t})
\right\|^2
<\infty
\qquad \text{almost surely}.
\]

\item Consequently,
\[
\liminf_{t\to\infty}
\left\|
\nabla_{\boldsymbol{\theta}}
J(\boldsymbol{\theta}^t,\mathbf{V}^{t})
\right\|
=
0
\qquad \text{almost surely}.
\]
\end{enumerate}
\end{theorem}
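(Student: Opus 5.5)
The plan is to reuse the one-step stochastic descent inequality from the proof of \cref{thm:stochastic_rate_V}, this time with a time-varying step size, and then apply the Robbins--Siegmund supermartingale convergence theorem. Exactly as in \eqref{eq:smooth_step_bound}--\eqref{eq:one_step_expected_simple}, I would combine \cref{ass:smooth} (descent for the map $\boldsymbol{\theta}\mapsto J(\boldsymbol{\theta},\mathbf{V}^t)$) with the exact $\mathbf{V}$-minimization of \cref{ass:exact}, which gives $J_{t+1}\le J(\boldsymbol{\theta}^{t+1},\mathbf{V}^t)$. I would then take conditional expectations using \cref{ass:unbiased,ass:var}. With $\eta_t\le 1/L$ from \cref{ass:rm_steps}, this yields
\[
\mathbb E[J_{t+1}-J_{\inf}\mid \Phi_t]\le (J_t-J_{\inf})-\frac{\eta_t}{2}\|\nabla_t\|^2+\frac{L\eta_t^2\sigma^2}{2}.
\]
Here I use that $J_t$ and $\nabla_t$ are $\Phi_t$-measurable, because $\mathbf{V}^t$ is a (measurably selected) function of $\boldsymbol{\theta}^t$. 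This measurability point should be stated explicitly.

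Next, set $X_t:=J_t-J_{\inf}\ge 0$ (by \cref{ass:lower}), $Y_t:=\frac{\eta_t}{2}\|\nabla_t\|^2\ge0$, and $Z_t:=\frac{L\sigma^2}{2}\eta_t^2$. By \cref{ass:rm_steps}, $\sum_t Z_t<\infty$ deterministically. The Robbins--Siegmund theorem then gives that $X_t$ converges almost surely to a finite random variable, which is item 1. It also gives $\sum_t Y_t<\infty$ almost surely, which is item 2 after multiplying by 2.

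For item 3, argue by contradiction on the event where the liminf is positive. If $\liminf_t\|\nabla_t\|>0$ on some sample path, there exist $\epsilon>0$ and $t_0$ with $\|\nabla_t\|^2\ge\epsilon^2$ for all $t\ge t_0$. Then $\sum_t\eta_t\|\nabla_t\|^2\ge\epsilon^2\sum_{t\ge t_0}\eta_t=\infty$, because $\sum_t\eta_t=\infty$. This contradicts item 2, so the event has probability zero.

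The main obstacle is justifying the supermartingale step rigorously. Two things need care. First, the filtration must contain $\boldsymbol{\theta}^t$ and hence $\mathbf{V}^t$; compactness in \cref{ass:compact} guarantees that the argmin is nonempty, and a measurable selection can be assumed. Second, integrability of $J_t$ must hold for the conditional expectations to make sense; this follows inductively from $J_0$ being finite and the one-step bound. Everything else is the routine application of Robbins--Siegmund.
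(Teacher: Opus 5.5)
Your proposal is correct and follows the paper's proof: it uses the same one-step bound with $\eta_t$, the same Robbins--Siegmund setup $X_t=J_t-J_{\inf}$, $a_t=\frac{\eta_t}{2}\|\nabla_t\|^2$, $b_t=\frac{L\eta_t^2}{2}\sigma^2$, and the same appeal to $\sum_t\eta_t=\infty$ for the $\liminf$. Your remarks on measurable selection of $\mathbf{V}^t$ and on integrability, and your explicit contradiction argument for item 3, fill in details the paper leaves implicit.
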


\begin{proof} We begin by recalling the Robbins--Siegmund lemma, which provides a standard convergence result for quasi-supermartingale sequences.

\begin{lemma}[Robbins--Siegmund, \citealp{robbins1971convergence}]
\label{lem:robbins_siegmund}
Let $\{X_t\}_{t\ge 0}$ be a nonnegative adapted sequence with respect to a filtration
$\{\Phi_t\}_{t\ge 0}$.
Assume there exist nonnegative $\Phi_t$-measurable sequences
$\{a_t\}_{t\ge 0}$ and $\{b_t\}_{t\ge 0}$ such that
\[
\mathbb E[X_{t+1}\mid \Phi_t] \le X_t - a_t + b_t
\qquad \text{a.s. for all } t,
\]
and $\sum_{t=0}^{\infty} b_t < \infty$ almost surely.
Then $X_t$ converges almost surely to a finite random variable and
$\sum_{t=0}^{\infty} a_t < \infty$ almost surely.
\end{lemma}

We now show that the sequence generated by the algorithm satisfies the conditions of this lemma.
Define
\[
J_t := J(\boldsymbol{\theta}^t,\mathbf{V}^t),
\qquad
\nabla_t := \nabla_{\boldsymbol{\theta}}J(\boldsymbol{\theta}^t,\mathbf{V}^{t}),
\qquad
X_t := J_t - J_{\inf} \ge 0.
\]

From the one-step bound (cf.\ \eqref{eq:one_step_expected_simple}), for all $t\ge 0$,
\[
\mathbb E[J_{t+1}\mid \Phi_t]
\le
J_t
-
\frac{\eta_t}{2}\|\nabla_t\|^2
+
\frac{L\eta_t^2}{2}\sigma^2 .
\]
Subtracting $J_{\inf}$ from both sides yields
\[
\mathbb E[X_{t+1}\mid \Phi_t]
\le
X_t - a_t + b_t,
\qquad
a_t := \frac{\eta_t}{2}\|\nabla_t\|^2,
\quad
b_t := \frac{L\eta_t^2}{2}\sigma^2.
\]

By Assumption~\ref{ass:rm_steps} we have $\sum_{t=0}^{\infty}\eta_t^2<\infty$, hence
$\sum_{t=0}^{\infty} b_t < \infty$.
Therefore all the conditions of Lemma~\ref{lem:robbins_siegmund} are satisfied.
Applying the lemma implies that $X_t$ converges almost surely to a finite random variable,
and
\[
\sum_{t=0}^{\infty}\eta_t\|\nabla_t\|^2 < \infty
\qquad \text{a.s.}
\]

Finally, since $\sum_{t=0}^{\infty}\eta_t=\infty$ (Assumption~\ref{ass:rm_steps}),
the above summability implies
\[
\liminf_{t\to\infty}\|\nabla_t\| = 0
\qquad \text{a.s.}
\]
This proves Theorem~\ref{thm:stochastic_as_V}.
\end{proof}
\section{Trade-off between Privacy, Communication, and Local Computation}\label{app:tradeoff}
Our proposed algorithmic solution in \cref{sec:algorithmic_solution} leverages sharing of raw latent representations to lower the communication cost, according to the objective in \cref{prob:sfrl}.
This is compatible with standard regulatory constraints on privacy, for instance when latent representations do not allow direct identification of individuals or when appropriate safeguards such as aggregation, anonymization, or secure communication protocols are in place, as commonly assumed in federated learning systems~\citep{kairouz2021advances,DBLP:conf/aistats/McMahanMRHA17}. However, as highlighted in \cref{rem:privacy}, in some circumstances more stringent regulatory constraints on privacy impair the sharing of raw latent representations. For example, this occurs when latent representations may still encode sensitive or identifiable information and are therefore susceptible to reconstruction or inference attacks, such as model inversion~\citep{fredrikson2015model}, membership inference~\citep{shokri2017membership}, or attribute inference~\citep{melis2019exploiting}, making their direct exchange incompatible with strict privacy requirements. Our proposed approach naturally extends to this more stringent setting. Instead of sharing raw latent representations, agents exchange their transported counterparts, i.e., latent features mapped through the restriction maps associated with the edges. This limits the direct exposure of local features. However, this enhanced privacy comes at a cost: it requires additional local computation and increases the communication overhead, as detailed below.

Concerning the local computational cost, each agent $i \in \vertexset$ needs to encode its latent representations toward both sets of neighbors $\vertexset(i)^-$ and $\vertexset(i)^+$. 
To this end, we adopt the formulation in \Cref{eq:local_gluing} \emph{before reparameterization}, where both orthogonal and Stiefel maps explicitly appear, allowing consistent encoding across heterogeneous neighbors. 
Specifically, $i$ must communicate $\V_{ji}^t \A_i^t \in \reall^{d_j \times K}$ to neighbors $j \in \vertexset(i)^+$, where $\V_{ji}^t \in \stiefel{d_j}{d_i}$ is the outgoing embedding matrix, and $\myO_{ji}^t \A_i^t \in \reall^{d_i \times K}$ to neighbors $j \in \vertexset(i)^-$, where $\myO_{ji}^t \in \ort{d_i}$ is an orthogonal transformation. 
Hence, in this setting we cannot exploit the reparameterized form in \Cref{eq:local_gluing}, which avoids the explicit use of orthogonal maps. 
In detail, after receiving the encoded latent representations from its neighbors, namely
\begin{equation}
\V_{ij}^t\A_j^{t} \text{ from } j \in \vertexset(i)^- \quad \text{and}\quad \myO_{ij}^t\A_j^{t} \text{ from } j \in \vertexset(i)^+\,;
\end{equation}
under stringent privacy constraints, agent $i$ must solve the following problems to update its restriction maps:
\begin{align}
\myO_{ji}^{t} &= \argmin_{\myO_{ji} \in \ort{d_i}} \; \frob{\myO_{ji}\A_{i}^{t} - \V_{ij}^t \A_j^{t}}^2 ,, \quad \forall, j \in \mathcal{N}(i)^- \,,
\tag{P2c}\label{prob:alignment_O_ij_strict_privacy}\\
\V_{ji}^{t} &= \argmin_{\V_{ji} \in \stiefel{d_j}{d_i}} \; \frob{\myO_{ij}^t\A_{j}^{t} - \V_{ji} \A_i^{t}}^2 ,, \quad \forall, j \in \mathcal{N}(i)^+ \,.
\tag{P2d}\label{prob:alignment_V_ji_strict_privacy}
\end{align}

Unlike the case of standard privacy constraints, where for $j \in \vertexset(i)^-$ we can only update the incoming restriction map $\V_{ij}$ and solve Prob.\nb\eqref{prob:alignment_V_ij} via a thin SVD at a computational cost of $\mathcal{O}(d_i d_j^2)$ with $d_j \leq d_i$, Prob.\nb\eqref{prob:alignment_O_ij_strict_privacy} is a canonical orthogonal Procrustes problem. Its solution is obtained via a classical SVD of $\V_{ij}^t \A_j^t \A_i^{t^\top}$ at a cost of $\mathcal{O}(d_i^3)$, thus leading to an increase in the local computational burden.
Conversely, similarly to Problem~\eqref{prob:alignment_V_ji}, for $j \in \vertexset(i)^+$ the update of the outgoing restriction map $\V_{ji}$ can be computed via a thin SVD of $\myO_{ij}^t \A_j^t \A_i^{t^\top}$ at a cost of $\mathcal{O}(d_j d_i^2)$, where in this case $d_j \geq d_i$.

Please notice that, more stringent privacy constraints affect only the update of the restriction maps.
Indeed, the local gradient computation in \Cref{eq:gradient_gluing_penalty} remains unchanged:
\begin{equation}\label{eq:gradient_gluing_penalty_privacy}
    \begin{aligned}
\nabla_{\boldsymbol{\varphi}_i}\mathcal R_{\mathcal{A}}\at{i}
=&\frac{\lambda_i}{K}\Bigg[\sum_{j\in \vertexset(i)^-}\sum_{k\in \mathcal{A}}\left( \nabla_{\boldsymbol{\varphi}_i}f_{\boldsymbol{\varphi}_i}(\mathbf{x}_i^{k})\right)^\top \myO_{ji}^\top\left(\myO_{ji} f_{\boldsymbol{\varphi}_i}\left(\mathbf{x}_i^{k}\right)-\V_{ij}f_{\boldsymbol{\varphi}_j}\left(\mathbf{x}_j^{k}\right)\right) \\
&- \sum_{j\in \vertexset(i)^+}\sum_{k\in \mathcal{A}}\left(\nabla_{\boldsymbol{\varphi}_i}f_{\boldsymbol{\varphi}_i}(\mathbf{x}_i^{k})\right)^\top\V_{ji}^\top\left(\myO_{ij}f_{\boldsymbol{\varphi}_j}\left(\mathbf{x}_j^{k}\right)-\V_{ji}f_{\boldsymbol{\varphi}_i}\left(\mathbf{x}_i^{k}\right)\right)\Bigg]\\
=&\frac{\lambda_i}{K}\Bigg[\sum_{j\in \vertexset(i)^-}\sum_{k\in \mathcal{A}}\left( \nabla_{\boldsymbol{\varphi}_i}f_{\boldsymbol{\varphi}_i}(\mathbf{x}_i^{k})\right)^\top \left( f_{\boldsymbol{\varphi}_i}\left(\mathbf{x}_i^{k}\right)-\myO_{ji}^\top\V_{ij}f_{\boldsymbol{\varphi}_j}\left(\mathbf{x}_j^{k}\right)\right) \\
&- \sum_{j\in \vertexset(i)^+}\sum_{k\in \mathcal{A}}\left(\nabla_{\boldsymbol{\varphi}_i}f_{\boldsymbol{\varphi}_i}(\mathbf{x}_i^{k})\right)^\top\left(\V_{ji}^\top\myO_{ij}f_{\boldsymbol{\varphi}_j}\left(\mathbf{x}_j^{k}\right)-f_{\boldsymbol{\varphi}_i}\left(\mathbf{x}_i^{k}\right)\right)\Bigg]\\
\stackrel{(a)}{=}& \frac{\lambda_i}{K} \sum_{j\in \mathcal N(i)}\sum_{k\in \mathcal{A}}\left(\nabla_{\boldsymbol{\varphi}_i}f_{\boldsymbol{\varphi}_i}(\mathbf{x}_i^{k})\right)^\top \left(f_{\boldsymbol{\varphi}_i}\left(\mathbf{x}_i^{k}\right)-\myO_{ji}^\top\V_{ij}f_{\boldsymbol{\varphi}_j}\left(\mathbf{x}_j^{k}\right)\right)\,;
    \end{aligned}
\end{equation}
where in $(a)$, for those $j \in \vertexset(i)^+$ we used that for a network sheaf $\V_{ji}^\top\myO_{ij}=\myO_{ji}^\top\V_{ij}$.
Importantly, $\myO_{ji}^\top\V_{ij}$ is exactly the reparameterization exploited in \cref{eq:local_gluing}, thus showing the equivalence between \cref{eq:gradient_gluing_penalty} and \cref{eq:gradient_gluing_penalty_privacy}.

Regarding the communication cost, sharing transported latent representations increases the communication overhead toward neighbors $j \in \vertexset(i)^+$. 
Under standard privacy constraints, agent $i$ transmits $\A_i^t \in \reall^{d_i \times K}$, whereas in the more stringent setting it must send $\V_{ji}\A_i^t \in \reall^{d_j \times K}$, with $d_j \geq d_i$. 
Thus, the communicated payload scales with $d_j$, and strictly increases whenever $d_j>d_i$.

Our modified Sheaf-FRL protocols under stringent privacy regulatory constraints for both the heterogeneous and homogeneous cases are given in \Cref{alg:sheaf_frl_1comm_stringent_privacy,alg:sheaf_frl_hom_stringent_privacy}, respectively.
Additionally, \cref{tab:tradeoff} details the privacy vs. communication and computational costs trade-off.

\begin{table}[h]
\small
\centering
\caption{Trade-off between privacy and communication and local computation cost per agent. The local computation refers to the update of the restriction maps, i.e., the part influenced by more stringent privacy constraints. To aid comparison, for the heterogeneous case we highlight in \blue{blue} the larger of $d_i$ and $d_j$ (either one when $d_i=d_j$) when considering either $j \in \mathcal{N}(i)^-$ or $j \in \mathcal{N}(i)^+$. In \red{red}, we instead highlight we instead highlight
the (potential) increase in cost implied by more stringent privacy constraints.}
\begin{tabular}{llcc}
\toprule
 &  & Standard Privacy & Strict Privacy \\
\midrule

\multirow{2}{*}{Heterogeneous}
& Communication 
& $\mathcal{O}(|\mathcal{N}(i)| d_i K)$ 
& $\mathcal{O}\big(|\mathcal{N}(i)^-| d_i K + K\!\sum_{j \in \vertexset(i)^+}\!\red{d_j}\big)$ \\

\cmidrule(lr){2-4}
\addlinespace[2pt]

& \shortstack[l]{Local \\ computation} 
& $\mathcal{O}\big(\blue{d_i}\!\sum_{j \in \mathcal{N}(i)^-}\! d_j^2 + d_i^2\!\sum_{j \in \mathcal{N}(i)^+}\!\blue{d_j}\big)$  
& $\mathcal{O}\big(|\mathcal{N}(i)^-| \red{d_i^3} + d_i^2\!\sum_{j \in \mathcal{N}(i)^+}\!\blue{d_j}\big)$ \\

\addlinespace[6pt]
\midrule

\multirow{2}{*}{Homogeneous}
& Communication 
& $\mathcal{O}(|\mathcal{N}(i)| d K)$ 
& $\mathcal{O}(|\mathcal{N}(i)| d K)$ \\

\cmidrule(lr){2-4}
\addlinespace[2pt]

& \shortstack[l]{Local \\ computation} 
& $\mathcal{O}(|\mathcal{N}(i)^+| d^3)$ 
& $\mathcal{O}(|\red{\mathcal{N}(i)^-}\cup \mathcal{N}(i)^+| d^3)$ \\

\bottomrule
\end{tabular}
\label{tab:tradeoff}
\end{table}

\textbf{Non-identifiability and privacy guarantees.} \cref{tab:tradeoff} highlights a clear trade-off. Under stricter privacy requirements, the proposed protocol incurs higher communication and local computational costs, since agents exchange transported latent representations and must explicitly update the corresponding restriction maps. The benefit, however, is a stronger form of privacy: the communicated messages no longer reveal the latent pilots in their original coordinates. 
More precisely, when only transported pilots are shared, an observer that does not know the underlying restriction maps cannot uniquely reconstruct the original latent representations, but can only recover geometric information that is invariant under isometries, as formalized in the following proposition.

\begin{proposition}[Non-identifiability of latent pilots under unknown maps]
\label{prop:privacy_nonidentifiability}
Let $\A \in \reall^{d \times K}$ be a matrix of latent pilot representations, and let $\Y = \T \A$, where $\T$ is an unknown restriction map satisfying $\T^\top \T = \mathbf{I}_d$. This includes both the homogeneous case $\T \in \ort{d}$ and the heterogeneous case $\T \in \stiefel{D}{d}$ with $D \ge d$. Then the latent pilots $\A$ are not uniquely identifiable from $\Y$ without knowledge of $\T$. In particular, $\A$ is determined only up to an orthogonal transformation.
\end{proposition}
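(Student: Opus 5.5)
The plan has two parts: show that the Gram matrix $\Y^\top\Y$ is the only information about $\A$ that $\Y$ carries, and show that the set of $\A$ consistent with $\Y$ is exactly an orbit of $\ort{d}$. The invariance half follows from $\T^\top\T=\eye{d}$:
\[
\Y^\top\Y=\A^\top\T^\top\T\A=\A^\top\A .
\]
So every quantity that depends on $\A$ only through inner products between pilots, namely norms, angles and pairwise distances, is recoverable from $\Y$.

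For non-identifiability, I will build explicit alternative explanations of the observation. Take any $\Q\in\ort{d}$ and set $\A'=\Q\A$ and $\T'=\T\Q^\top$. Then $(\T')^\top\T'=\Q\T^\top\T\Q^\top=\eye{d}$, so $\T'$ belongs to the same class as $\T$: it is orthogonal when $D=d$ and lies in $\stiefel{D}{d}$ when $D>d$. Moreover $\T'\A'=\T\Q^\top\Q\A=\Y$. Hence $(\T',\A')$ produces the same message. Whenever $\A\neq\zeros$ one can pick $\Q$ with $\Q\A\neq\A$, for example a reflection across a hyperplane not containing some nonzero column of $\A$. Therefore $\A$ is not uniquely determined by $\Y$.

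For the sharp statement "only up to an orthogonal transformation", suppose $\T\A=\T'\A'=\Y$ with both maps having orthonormal columns. By the invariance above, $\A^\top\A=\A'^\top\A'$. I will then invoke the standard fact that two $d\times K$ matrices with the same Gram matrix differ by an orthogonal matrix, i.e.\ $\A'=\Q\A$ for some $\Q\in\ort{d}$. To prove it, define $\Q$ on $\im{\A}$ by $\A\mathbf{c}\mapsto\A'\mathbf{c}$. This is well defined and isometric because $\|\A\mathbf{c}\|^2=\mathbf{c}^\top\A^\top\A\mathbf{c}=\|\A'\mathbf{c}\|^2$. It then extends to all of $\reall^d$ by mapping an orthonormal basis of $\im{\A}^\perp$ onto one of $\im{\A'}^\perp$; these complements have the same dimension since the two ranks coincide.

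The main obstacle is this extension step when $\A$ is rank-deficient, for instance when $K<d$, the regime emphasized in App.~\ref{app:pilots}. In that case the well-definedness and the dimension count must be checked carefully, and the argument also shows the ambiguity is even larger than a single $\Q$, since it includes the freedom on the complement. The case $D>d$ adds no difficulty, because only $\T^\top\T=\eye{d}$ is ever used.
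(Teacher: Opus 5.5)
Your construction ($\A'=\Q\A$, $\T'=\T\Q^\top$, with $(\T')^\top\T'=\eye{d}$ and $\T'\A'=\Y$) is exactly the paper's entire proof. You then go further in two ways that the paper leaves implicit.

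\emph{The zero case.} Non-uniqueness requires some $\Q$ with $\Q\A\neq\A$, and such a $\Q$ exists if and only if $\A\neq\zeros$. For $\A=\zeros$, injectivity of $\T$ makes $\A$ identifiable, so the proposition as stated tacitly needs this hypothesis.

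\emph{The reverse inclusion.} Using $\Y^\top\Y=\A^\top\A$ and the extension lemma, you show the set of pilot matrices consistent with $\Y$ is exactly the orbit $\{\Q\A:\Q\in\ort{d}\}$. The paper's one-line argument only shows the orbit is contained in the consistent set, i.e., that the ambiguity is \emph{at least} an orthogonal transformation. Your argument is what justifies ``only up to'' in the sharp sense. It also says precisely what an observer does learn: all inner products among pilots.

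The rank-deficient extension you flag as the main obstacle is already handled by what you wrote. Well-definedness follows from your norm identity applied to $\mathbf{c}-\mathbf{c}'$. Equal ranks follow from $\mathrm{rank}(\A)=\mathrm{rank}(\A^\top\A)=\mathrm{rank}(\A'^\top\A')=\mathrm{rank}(\A')$, which you should state explicitly.

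One interpretive correction: the freedom on $\im{\A}^\perp$ makes the orthogonal matrix relating $\A$ and $\A'$ non-unique, because $\A$ has a nontrivial stabilizer when it is rank-deficient. It does not enlarge the set of admissible $\A'$ beyond the orbit, so the ambiguity in the pilots is not ``larger than a single $\Q$''.
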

\begin{proof}
Let $\Q \in \ort{d}$ and $\A'=\Q\A$. Then $\Y = \T \A = \T\Q^\top \A'$, and $(\T\Q^\top)^\top(\T\Q^\top)=\mathbf{I}_d$. Thus $\T\Q^\top$ is a valid map, implying that $\A$ is identifiable only up to orthogonal transformations.
\end{proof}

\begin{algorithm}[t]
\caption{Sheaf-FRL under stringent privacy regulatory constraints}
\label{alg:sheaf_frl_1comm_stringent_privacy}
\begin{algorithmic}[1]
\REQUIRE Oriented $\graph=(\vertexset,\edgeset)$, pilots $\mathcal A$, $\lambda>0$, $\eta_\theta>0$, iterations $T$
\STATE Initialize $\{\boldsymbol{\theta}_i^0=(\boldsymbol{\varphi}_i^0,\boldsymbol{\psi}_i^0)\}_{i\in\vertexset}$, $\{\myO_{ji}^0\in\ort{d_i}\}_{j\in\mathcal N(i)^-,\,i\in\vertexset}$,
$\{\V_{ji}^0\in\stiefel{d_j}{d_i}\}_{j\in\mathcal N(i)^+,\,i\in\vertexset}$

\FOR{$t=0,\dots,T-1$}

\FORALL{$i\in\vertexset$ \textbf{in parallel}}
  \STATE $\mathbf{A}_i^t \gets [f_{\boldsymbol{\varphi}_i^t}(\x_i^{k})]_{k\in\mathcal A}$
\ENDFOR

\STATE Each node $i$ broadcasts $\myO_{ji}^t\mathbf{A}_i^t$ to those $j\in\mathcal N(i)^-$ and $\V_{ji}^t\mathbf{A}_i^t$ to $j\in\mathcal N(i)^+$

\FORALL{$i\in\vertexset$ \textbf{in parallel}}
  \FOR{$j \in \mathcal N(i)$}
    \IF{$j\in\mathcal N(i)^-$}
      \STATE $[\mathbf{U},\mathbf{\Sigma},\mathbf{W}^\top]\gets\mathrm{SVD}(\V_{ij}^t\mathbf{A}_j^{t}\mathbf{A}_i^{t^\top})$
      \STATE $\myO_{ji}^{t}\gets \mathbf{U}\mathbf{W}^\top$
    \ELSE
      \STATE $[\mathbf{U},\mathbf{\Sigma},\mathbf{W}^\top]\gets\mathrm{thinSVD}(\myO_{ij}^t\mathbf{A}_j^t\mathbf{A}_i^{t^\top})$
      \STATE $\mathbf{V}_{ji}^{t}\gets \mathbf{U}\mathbf{W}^\top$
    \ENDIF
  \ENDFOR
\ENDFOR

\FORALL{$i\in\vertexset$ \textbf{in parallel}}
  \STATE $\boldsymbol{\theta}_i^{t+1}
  \gets
  \boldsymbol{\theta}_i^t
  -\eta_\theta\left(
  \nabla_{\boldsymbol{\theta}_i}\mathcal L_i(\boldsymbol{\theta}_i^t)
  +
  \mathbf{r}_i^t
  \right)$
\ENDFOR

\ENDFOR
\end{algorithmic}
\end{algorithm}

\begin{algorithm}[t]
\caption{Sheaf-FRL under stringent privacy regulatory constraints (homogeneous case)}
\label{alg:sheaf_frl_hom_stringent_privacy}
\begin{algorithmic}[1]
\REQUIRE Oriented $\graph=(\vertexset,\edgeset)$, pilots $\mathcal A$, $\lambda>0$, $\eta_\theta>0$, iterations $T$
\STATE Initialize $\{\boldsymbol{\theta}_i^0=(\boldsymbol{\varphi}_i^0,\boldsymbol{\psi}_i^0)\}_{i\in\vertexset}$,  $\{\myO_{ji}^0 \in \ort{d}\}_{j \in \vertexset(i), i \in \vertexset}$ 

\FOR{$t=0,\dots,T-1$}

\FORALL{$i\in\vertexset$ \textbf{in parallel}}
  \STATE $\mathbf{A}_i^t \gets [f_{\boldsymbol{\varphi}_i^t}(\mathbf{x}_i^{k})]_{k\in\mathcal A}$
\ENDFOR

\STATE Each node $i$ exchanges $\myO_{ji}^{t}\mathbf{A}_i^t$ with all $j\in\mathcal N(i)$

\FORALL{$i\in\vertexset$ \textbf{in parallel}}
  \FORALL{$j\in\mathcal N(i)$}
    \STATE $[\mathbf{U},\mathbf{\Sigma},\mathbf{W}^\top]
    \gets
    \mathrm{SVD}\big(\myO_{ij}^{t}\mathbf{A}_j^t\mathbf{A}_i^{t^\top}\big)$
    \STATE $\myO_{ji}^{t}\gets \mathbf{U}\mathbf{W}^\top$
  \ENDFOR
\ENDFOR

\FORALL{$i\in\vertexset$ \textbf{in parallel}}
  \STATE $\boldsymbol{\theta}_i^{t+1}
  \gets
  \boldsymbol{\theta}_i^{t}
  -\eta_\theta\left(
  \nabla_{\boldsymbol{\theta}_i}\mathcal L_i(\boldsymbol{\theta}_i^{t})
  +
  \mathbf{r}_i^{t}
  \right)$
\ENDFOR

\ENDFOR

\end{algorithmic}
\end{algorithm}

\section{Supervised classification experimental details}\label{app:exp_details}

All experiments run on MNIST with the label-shift-induced data heterogeneity of \Cref{eq:local-distribution-shift}. Every agent trains with stochastic gradient descent (learning rate $10^{-2}$, momentum $0.9$, weight decay $5\cdot10^{-4}$) and batch size $64$; the fifteen-agent runs additionally clip gradient norms at $1.0$. A held-out shared pilot set ($10\%$ of the training pool, restricted to the classes observed by every agent so that no encoder is asked to embed out-of-distribution samples) serves two purposes: it provides the pilots exchanged during sheaf-based training, and it is the data on which whitening layers of Sheaf-FRL are trained, and (post-training) alignment maps are fitted. Reported accuracies are computed on the private test sets, different from both training and pilot data.

\spara{Two-agent heterogeneous pair (\cref{fig:hetero_bottleneck}).} The two agents are drawn from a parametric CNN family indexed by the bottleneck dimension $d \in \{16, 32, 64, 128, 256, 512\}$, matched so that the last block of both encoders outputs the same $d$ (\cref{tab:bottleneck_models}). Each convolutional block consists of a convolution, batch normalization, ReLU, and max-pooling; the final feature map is globally pooled, so the latent dimension equals the width of the last block. Agent~$0$ uses a dropout of $0.3$ while agent~$1$ uses a dropout of $0.1$. Agent~$0$'s target classes are $\{4,\dots,9\}$ and agent~$1$'s are $\{0,\dots,5\}$, at a fixed shift strength $s=0.7$; the graph has two nodes corresponding to the agents and a single edge connecting them. Training lasts $20$ epochs with the common optimizer above.

\spara{Fifteen-agent network (\cref{fig:multiagent_shift}).} \cref{tab:agent_zoo} lists the fifteen architectures and the target classes assigned to each agent. All encoders use batch normalization; the latent (stalk) dimension $d_i$ equals the width of the last convolutional block. The proposed methods run for a total of $100$ epochs.

\begin{table}[t]
\centering
\caption{The fifteen heterogeneous CNN agents of the multi-agent benchmark (Section~\ref{sec:numerical_results}). Encoder: output channels of each convolutional block; head: hidden widths of the MLP classifier; $d_i$: latent (stalk) dimension. Total: $3.41$M parameters.}
\label{tab:agent_zoo}
\small
\begin{tabular}{c l l c c l}
\toprule
Agent & Encoder widths & Head widths & Dropout & $d_i$ & Target classes $\mathcal{C}_i$ \\
\midrule
0  & $(32, 64, 128)$        & $(256, 128, 64)$ & $0.30$ & $128$ & $\{4,5,6,7,8\}$ \\
1  & $(32, 64, 128, 224)$   & $(120)$          & $0.10$ & $224$ & $\{0,1,2,3,4\}$ \\
2  & $(16, 32, 64)$         & $(128)$          & $0.20$ & $64$  & $\{0,1,2,3,4,5\}$ \\
3  & $(64, 128, 128)$       & $(256, 128)$     & $0.25$ & $128$ & $\{4,5,6,7,8,9\}$ \\
4  & $(32, 48, 96, 192)$    & $(192, 96)$      & $0.30$ & $192$ & $\{0,1,2,7,8,9\}$ \\
5  & $(24, 48, 96)$         & $(96)$           & $0.15$ & $96$  & $\{2,3,4,5,6\}$ \\
6  & $(32, 64, 128, 256)$   & $(256)$          & $0.40$ & $256$ & $\{1,2,3,4,5,6\}$ \\
7  & $(24, 48, 96, 144)$    & $(64, 32)$       & $0.10$ & $144$ & $\{3,4,5,6,7\}$ \\
8  & $(48, 96, 160)$        & $(160, 80)$      & $0.20$ & $160$ & $\{0,2,4,6,8\}$ \\
9  & $(32, 80)$             & $(100)$          & $0.05$ & $80$  & $\{5,6,7,8,9\}$ \\
10 & $(40, 80, 120)$        & $(120, 60)$      & $0.20$ & $120$ & $\{1,2,3,4,5\}$ \\
11 & $(48, 96, 192, 240)$   & $(256, 128)$     & $0.20$ & $240$ & $\{0,1,2,8,9\}$ \\
12 & $(20, 40, 60, 80)$     & $(80)$           & $0.20$ & $80$  & $\{3,4,5,6,7,8\}$ \\
13 & $(32, 64, 96)$         & $(192, 96)$      & $0.25$ & $96$  & $\{1,4,6,8,9\}$ \\
14 & $(56, 112, 160)$       & $(128)$          & $0.10$ & $160$ & $\{0,1,2,3,7\}$ \\
\bottomrule
\end{tabular}
\end{table}

\begin{table}[t]
\centering
\caption{The shared two-agent architecture family used at every bottleneck dimension $d$ in Fig.~\ref{fig:hetero_bottleneck}; both agents' encoders are matched to end at width $d$. Encoder/head: output channels of each convolutional block / hidden widths of the MLP classifier. Dropout is $0.3$ for agent~0 and $0.1$ for agent~1.}
\label{tab:bottleneck_models}
\small
\begin{tabular}{c c l l}
\toprule
$d$ & Agent & Encoder widths & Head widths \\
\midrule
\multirow{2}{*}{16}  & 0 & $(4, 8, 16)$     & $(8, 4)$ \\
                      & 1 & $(8, 16)$        & $(8)$ \\
\multirow{2}{*}{32}  & 0 & $(8, 16, 32)$    & $(16, 8)$ \\
                      & 1 & $(16, 32)$       & $(16)$ \\
\multirow{2}{*}{64}  & 0 & $(16, 32, 64)$   & $(32, 16)$ \\
                      & 1 & $(32, 64)$       & $(32)$ \\
\multirow{2}{*}{128} & 0 & $(32, 64, 128)$  & $(64, 32)$ \\
                      & 1 & $(64, 128)$      & $(64)$ \\
\multirow{2}{*}{256} & 0 & $(64, 128, 256)$ & $(128, 64)$ \\
                      & 1 & $(128, 256)$     & $(128)$ \\
\multirow{2}{*}{512} & 0 & $(64, 128, 256, 512)$ & $(128, 64)$ \\
                      & 1 & $(64, 256, 512)$      & $(128, 64)$ \\
\bottomrule
\end{tabular}
\end{table}

\spara{Selection of the regularization coefficient.}
For Sheaf-FRL, as well as Sheaf-FMTL, ComFed, FedProto, and FedMuscle baselines, we need to find the best regularization coefficient. The applied strategy in each architecture configuration under consideration---either the bottleneck dimension in the two-agent sweep, or the distribution-shift strength in the fifteen-agent benchmark---is a grid search for the best $\lambda$ value in the $[10^{-5}, 1]$ interval.


\spara{Method-specific hyperparameters.} Beyond the regularization coefficient, every baseline also carries its own additional, fixed hyperparameters that are untouched by the cross-validation above. Sheaf-FMTL introduces two further constants, the local-regularization weight $\gamma = 10^{-3}$ and the personalization rate $\eta = 10^{-2}$, both held fixed across every shift strength and every bottleneck dimension. For further information to support the selection process of the Sheaf-FMTL hyperparameter $\gamma$ refer to the next paragraph. ComFed instead learns, for every agent $i$, an unconstrained linear projection matrix $\myP_i \in \mathbb{R}^{r \times d_i}$ mapping that agent's local $d_i$-dimensional latent space---the right, input-side dimension of $\myP_i$---into a shared space of dimension $r$, the left, output-side dimension common to every agent, in which class prototypes from all agents are aligned. We fix $r = 64$ in the fifteen-agent benchmark. In the two-agent bottleneck sweep, where every configuration already forces both agents to share a common latent dimension $d$, we instead set $\mathrm{proj\_dim} = d$ at each bottleneck value, so that ComFed's shared space coincides with the bottleneck dimension already under study, rather than further compressing or expanding it.
FedProto maintains a single global prototype per class, updated as an exponential moving average of the agents' local per-class embeddings with momentum $0.9$. FedMuscle organizes training into repeating communication rounds, each consisting of four local epochs of ordinary task-loss training followed by one additional epoch in which a contrastive alignment loss, with temperature $0.1$, pulls every agent's representations toward the network average.

\spara{Memory footprint and choice of $\gamma$ for Sheaf-FMTL.} Let the number of parameters of agents $i$ and $j$ be respectively $d_i$ and $d_j$, then the restriction maps matrix of Sheaf-FMTL $\myP_{ij}$ acts directly in parameter space, and its size is $d_{ij} \times d_i$ with $d_{ij} = \max(1, \lfloor \gamma \cdot \min(d_i, d_j) \rfloor)$. The aggregate storage across all edges grows as $O(\gamma \sum_{(i,j) \in \edgeset} \min(d_i, d_j)\, d_i)$, with the hyperparameter $\gamma$ controlling how fast the memory usage scales with the architecture sizes. With the $15$-agent heterogeneous ensemble of models of \cref{tab:agent_zoo} and the class-overlap graph of \cref{sec:numerical_results}, this places a hard ceiling on the values of $\gamma$ that fit in the machine memory (GPU RAM), well before any accuracy trade-off becomes relevant. \cref{fig:gamma_memory} reports a blue curve providing the theoretical storage occupied by the restriction maps $\myP_{ij}$ and model parameters in the real $15$-agent setup, and the green markers then represent the empirical peak resident-memory measurements from isolated single-epoch instantiations of Sheaf-FMTL at escalating $\gamma$. We use $\gamma = 10^{-3}$ throughout the $15$-agent benchmark (\cref{tab:agent_zoo})---a margin below the maximum available memory on the considered machine that accounts for the full evaluation protocol rather than a single run.

\begin{figure}[t]
    \centering
    \includegraphics[width=0.8\linewidth]{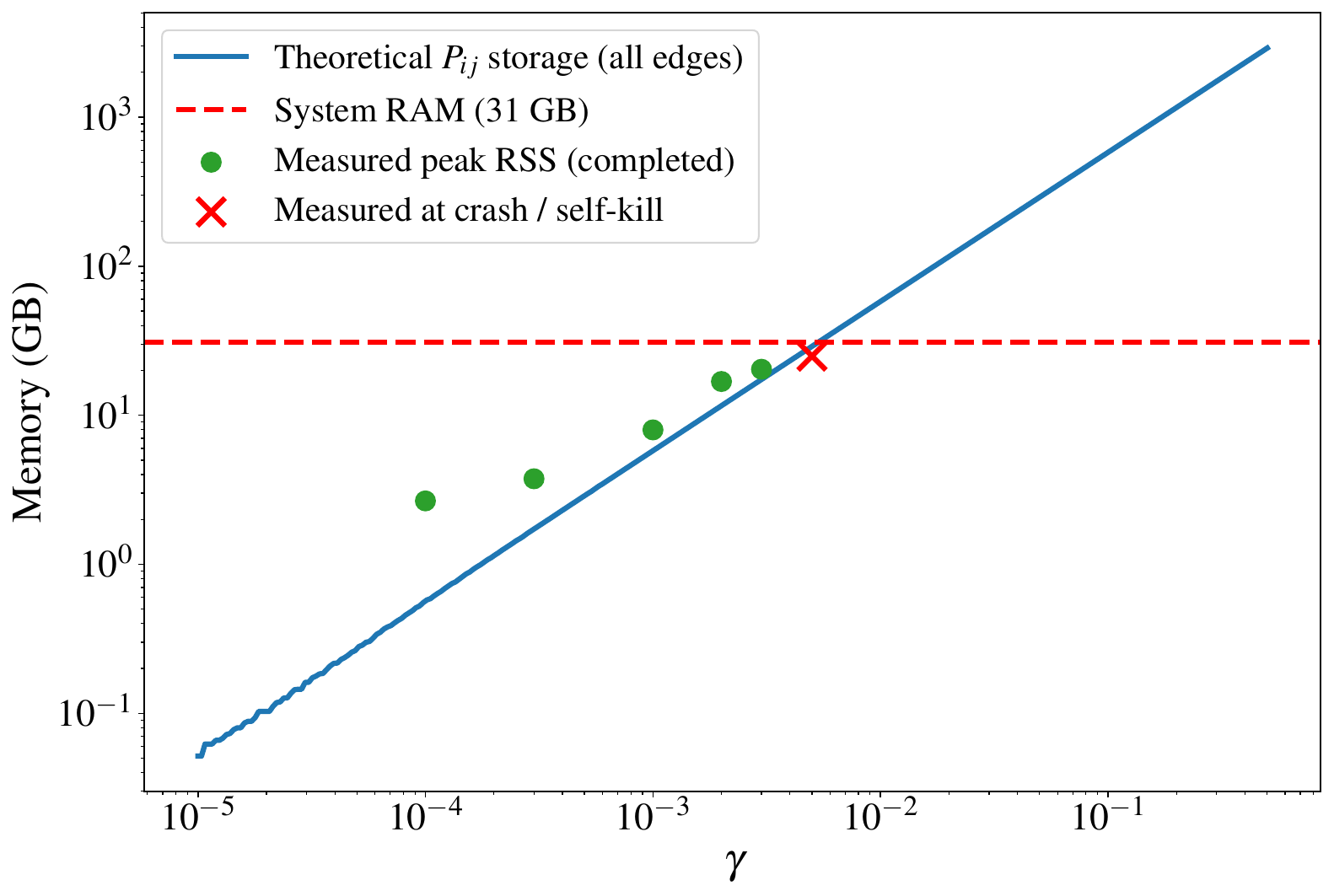}
    \caption{Aggregate memory footprint of the Sheaf-FMTL restriction maps $\{\myP_{ij}\}$ as a
    function of $\gamma$, for the $15$-agent architecture and communication graph of
    \cref{sec:numerical_results} (log-log axes). Solid curve: exact storage in the half-precision format used at rest. Dashed line: available system memory. Markers: measured peak resident memory of an isolated, single-epoch Sheaf-FMTL run at each $\gamma$ (crosses mark values at which the run was terminated for exceeding available memory). The operating point used throughout the paper ($\gamma = 10^{-3}$) sits below the single-run ceiling, leaving headroom for the repeated model instantiations of the full evaluation protocol (hyperparameter search preceding each reported run).}
    \label{fig:gamma_memory}
\end{figure}

\end{document}